\newcommand\eat[1]{}
\newtheorem{theorem}{Theorem}[section]
\newtheorem{corollary}{Corollary}[theorem]
\theoremstyle{remark}  %
\newcommand{\bzero}{\mathbf{0}}
\newcommand{\bone}{\mathbf{1}}
\newcommand{\btwo}{\mathbf{2}}
\newcommand{\bI}{\mathbf{I}}
\newcommand{\x}{\mathbf{x}}
\newcommand{\z}{\mathbf{z}}
\newcommand{\bu}{\mathbf{u}}
\newcommand{\cL}{\mathcal{L}}
\newcommand{\cN}{\mathcal{N}}
\newcommand{\cU}{\mathcal{U}}
\newcommand{\norm}[1]{\left\lVert#1\right\rVert}
\newcommand{\abs}[1]{\left|#1\right|}
\newcommand{\KL}[2]{\mathrm{KL}(#1 \,\|\, #2)}
\DeclareMathOperator{\E}{\mathbb{E}}
\newcommand{\Ex}[1]{\E\left[#1\right]}
\newcommand{\Var}[1]{\mathrm{Var}\left(#1\right)}
\newcommand{\eins}{\mathbf{1}}
\newcommand{\SNR}{\mathrm{SNR}}
\newcommand{\snr}{\mathrm{snr}}
\newcommand{\D}{\nicefrac{\Delta}{2}}
\newcommand{\DD}{\Delta}
\newcommand{\Dt}{\nicefrac{\Delta_t}{2}}
\newcommand{\DDt}{\Delta_t}
\def\eqref#1{eq.~(\ref{#1})}
\def\1{\bm{1}}
\def\eps{{\epsilon}}
\def\rvk{{\mathbf{k}}}
\def\veps{{\bm{\epsilon}}}
\def\x{{\mathbf{x}}}
\def\z{{\mathbf{z}}}
\DeclareMathAlphabet{\mathsfit}{\encodingdefault}{\sfdefault}{m}{sl}
\SetMathAlphabet{\mathsfit}{bold}{\encodingdefault}{\sfdefault}{bx}{n}
\newcommand{\sigmoid}{\sigma}
\title{Progressive Compression with \\ Universally Quantized Diffusion Models}
\author{
    Yibo Yang\thanks{Equal contribution} \qquad Justus C. Will\footnotemark[1] \qquad Stephan Mandt\\
    Department of Computer Science\\
    University of California, Irvine\\
    \texttt{\{yibo.yang, jcwill, mandt\}@uci.edu} \\
}
\begin{document}

\maketitle

\begin{abstract}
Diffusion probabilistic models have achieved mainstream success in many generative modeling tasks, from image generation to inverse problem solving.
A distinct feature of these models is that they correspond to deep hierarchical latent variable models optimizing a variational evidence lower bound (ELBO) on the data likelihood.
Drawing on a basic connection between likelihood modeling and compression, we explore the potential of diffusion models for progressive coding, resulting in a sequence of bits that can be incrementally transmitted and decoded with progressively improving reconstruction quality.
Unlike prior work based on Gaussian diffusion or conditional diffusion models, we propose a new form of diffusion model with uniform noise in the forward process, whose negative ELBO corresponds to the end-to-end compression cost using universal quantization.
We obtain promising first results on image compression, achieving competitive rate-distortion and rate-realism results on a wide range of bit-rates with a single model, bringing neural codecs a step closer to practical deployment.

\renewcommand*{\thefootnote}{\fnsymbol{footnote}}
\footnote[0]{\\Preprint. Under Review.}

\end{abstract}

\section{Introduction}
\label{sec:intro}

A diffusion probabilistic model can be equivalently viewed as a deep latent-variable model \citep{sohl2015deep,ho2020denoising, kingma2021variational}, a cascade of denoising autoencoders that perform score matching at different noise levels \citep{vincent2011connection, song2019generative},  or a neural SDE \citep{song2020score}. 
Here we take the latent-variable model view and explore the potential of diffusion models for communicating information.
Given the strong performance of these models on likelihood estimation \citep{kingma2021variational, nichol2021improved}, it is natural to ask whether they also excel in the closely related task of data compression \citep{ mackay2003information, yang2022introduction}.

\citet{ho2020denoising, theis2022lossy} first suggested a progressive compression method based on an unconditional diffusion model and demonstrated its strong potential for data compression. 
Such a \emph{progressive} codec is desirable as it allows us to decode data reconstructions from partial bit-streams, starting from lossy reconstructions at low bit-rates to perfect (lossless) reconstructions at high bit-rates,
all with a single model.
The ability to decode intermediate reconstructions without having to wait for all bits to be received is a highly useful feature present in many traditional codecs, such as JPEG.
The use of diffusion models has the additional advantage that we can, in theory, obtain perfectly realistic reconstructions \citep{theis2022lossy}, even at ultra-low bit-rates.
Unfortunately, the proposed method requires the communication of Gaussian samples across many steps,
which remains intractable because the exponential runtime complexity of channel simulation \citep{ goc2024channel}.

In this work, we take first steps towards a diffusion-based progressive codec that is computationally tractable. The key idea is to replace Gaussian distributions in the  forward process with suitable \emph{uniform} distributions and adjust the reverse process distributions accordingly. These modifications allow the application of universal quantization \citep{zamir1992universal} for simulating the uniform noise channel, avoiding the intractability of Gaussian channel simulation in  \citep{theis2022lossy}. 

Specifically, our contributions are as follows:
\begin{enumerate}
    \item We introduce a new form of diffusion model, Universally Quantized Diffusion Model (UQDM), that is suitable for end-to-end learned progressive data compression. Unlike in the closely-related Gaussian diffusion model \citep{kingma2021variational}, compression with UQDM is performed efficiently with universal quantization, avoiding the generally exponential runtime of relative entropy coding \citep{agustsson2020uq, goc2024channel}.
    \item We investigate design choices of UQDM, specifying its forward and reverse processes largely by matching the moments of those in Gaussian diffusion, and obtain the best results when we learn the reverse-process variance as inspired by \citep{nichol2021improved}.
    \item We provide theoretical insight into UQDM in relation to VDM, and derive the continuous-time limit of its forward process approaching that of the Gaussian diffusion. These results may inspire future research in improving the modeling formalism and training efficiency.
    \item We apply UQDM to image compression, and obtain competitive rate-distortion and rate-realism results which exceed existing progressive codecs at a wide range of bit-rates (up to lossless compression), all with a single model. Our results demonstrate, for the first time, the high potential of an unconditional diffusion model as a practical progressive codec. 
\end{enumerate}

\begin{figure}[t]
    \centering
    \includegraphics[width=\textwidth]{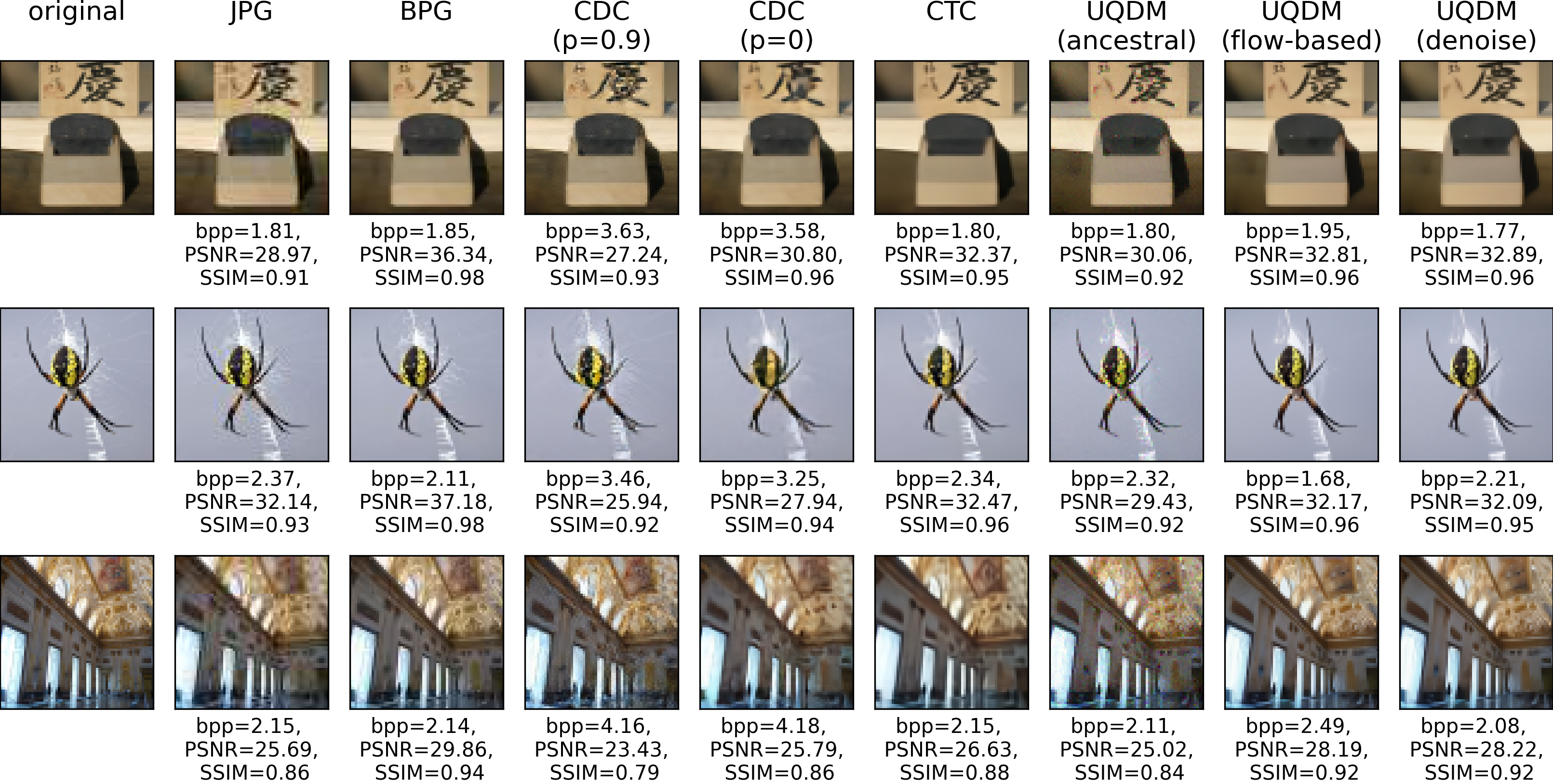}
    \caption{
    Example reconstructions from several traditional and neural codecs, chosen at roughly similar bitrates. At high bitrates, our UQDM method preserves details (e.g. shape and color pattern of the spider, or sharpness of the calligraphy) better than other neural codecs.
    Note that among the methods considered here, only ours and CTC \citep{jeon2023context} implement progressive coding.
    }
    \label{fig:imagenet-qualitative}
\end{figure}

\section{Background}
\label{sec:background}

\paragraph{Diffusion models}

Diffusion probabilistic models learn to model data by inverting a Gaussian noising process. Following the discrete-time setup of VDM \citep{kingma2021variational}, the forward noising process begins with a data observation $\x$ and defines a sequence of increasingly noisy latent variables $\z_t$ with a conditional Gaussian distribution,
\begin{align*}
    q(\z_t | \x) = \mathcal{N}(\alpha_t \x, \sigma^2_t \bI), \quad t=0, 1, ..., T.
\end{align*}
Here $\alpha_t$ and $\sigma^2_t$ are positive scalar-valued functions of time, with a strictly monotonically increasing \emph{signal-to-noise-ratio} $\SNR(t) \coloneqq \alpha_t^2 / \sigma_t^2$. The \emph{variance-preserving} process of DDPM \citep{ho2020denoising}
corresponds to the choice $\alpha_t^2 = 1-\sigma^2_t$.
The reverse-time generative model is defined by a collection of conditional distributions $p(\z_{t-1} | \z_t)$, a prior $p(\z_T)=\mathcal{N}(\mathbf{0}, \bI)$, and likelihood model $p(\x | \z_0)$. 
The conditional distributions $p(\z_{t-1} | \z_t) \coloneqq q(\z_{t-1} | \z_t, \x = \hat{\x}_\theta(\z_t, t))$ are chosen to have the same distributional form as the ``forward posterior'' distribution $q(\z_{t-1} | \z_t, \x)$, with $\x$ estimated from its noisy version $\z_t$ through the learned \emph{denoising model} $\hat{\x}_\theta$.
Further details on the forward and backward processes can be found in Appendix \ref{sec:app-forward} and \ref{sec:app-backward}. Throughout the paper the logarithms use base 2.
The model is trained by minimizing the negative ELBO (Evidence Lower BOund), 
\begin{align}
    \cL(\x) &= \underbrace{\KL{q(\z_T|\x)}{ p(\z_T)}}_{\coloneqq L_T} + \underbrace{\Ex{ - \log p(\x | \z_0)}}_{\coloneqq L_{\x|\z_0}} + \sum_{t=1}^T \underbrace{\Ex{\KL{q(\z_{t-1}|\z_t, \x)}{p(\z_{t-1}|\z_{t})}}}_{\coloneqq L_{t-1}}, \label{eq:prog-nelbo}
\end{align}
where the expectations are taken with respect to the forward process $q(\z_{0:T} | \x)$.
\citet{kingma2021variational} showed that a larger $T$ corresponds to a tighter bound on the marginal likelihood $\log p(\x)$, and as $T \to \infty$ the loss approaches the loss of a class of continuous-time diffusion models that includes the ones considered by \cite{song2020score}. %

\paragraph{Relative Entropy Coding (REC)}
Relative Entropy Coding (REC) deals with the problem of efficiently communicating a single sample from a target distribution $q$ using a coding distribution $p$. 
Suppose two parties in communication have access to a common ``prior'' distribution $p$ and pseudo-random number generators with a common seed; a Relative Entropy Coding (REC) method \citep{flamich2020compressing} allows the sender to transmit a sample $\z \sim q$ using close to $\KL{q}{p}$ bits on average. %
If $q$ arises from a conditional distribution, e.g., $q_\x = q(\z \mid \x)$ is the inference distribution of a VAE (which can be viewed as a noisy \emph{channel}), a \emph{reverse channel coding} or \emph{channel simulation} \citep{theis2022algorithms} algorithm then allows the sender to transmit $\z \sim q_\x$ with $\x \sim p(\x)$ using close to $\E_{\x \sim p(\x)}[\KL{q(\z \mid \x)}{p(\z)}]$ bits on average.
At a high level, a typical REC method works as follows.
The sender generates a (possibly large) number of candidate $\z$ samples from the prior $p$, 
\[
\z_n \sim p, \quad n = 1, 2, 3, ...,
\]
and appropriately chooses an index $K$ such that $\z_{K}$ is a fair sample from the target distribution, i.e., $\z_{K} \sim q$.
The chosen index $K \in \mathbb{N}$ is then converted to binary and transmitted to the receiver. 
The receiver recovers $\z_{K}$ by drawing the same sequence of $\z$ candidates from $p$ (made possible by using a pseudo-random number generator with the same seed as the sender) and stopping at the $K$th one.

A major challenge of REC algorithms is that their computational complexity generally scales exponentially with the amount of information being communicated \citep{agustsson2020uq, goc2024channel}. 
As an example, the MRC algorithm \citep{cuff2008communication, havasi2018minimal} %
draws $M$ candidate samples and selects $K \in \{1, 2,, ..., M\}$ with a probability proportional to the importance weights, $\nicefrac{q(\z_n)}{p(\z_n)}, n=1,...,M$; similarly to importance sampling, $M$ needs to be on the order of $2^{\operatorname{KL}(q\|p)}$ for $\z_{K}$ to be (approximately) a fair sample from $q$, thus requiring a number of drawn samples that scales exponentially with the relative entropy $\operatorname{KL}(q\|p)$ (the cost of transmitting $K$ is thus $\log M \approx \operatorname{KL}(q\|p)$ bits).
The exponential complexity 
prevents, e.g., naively communicating the entire latent tensor $\z$ in a Gaussian VAE for lossy compression, as the relative entropy $\KL{q(\z|\x)}{p(\z)}$ easily exceeds thousands of bits, even for a small image. %
This difficulty can be partly remedied by 
performing REC on sub-problems with lower dimensions \citep{flamich2020compressing, flamich2022fast} for which computationally viable REC algorithms exist \citep{flamich2024faster, flamich2024greedy}, 
but at the expense of worse bitrate efficiency due to the accumulation of codelength overhead across the dimensions.

\paragraph{Progressive Coding with Diffusion}

A \emph{progressive} compression algorithm allows for lossy reconstructions with improving quality as more bits are sent, up till a lossless reconstruction.
This results in variable-rate compression with a single bitstream, and is highly desirable in practical applications.  

As we will explain, the NELBO of a diffusion model (\eqref{eq:prog-nelbo}) naturally corresponds to the \emph{lossless} coding cost of a progressive codec, which can be optimized end-to-end on the data distribution of interest. 
Given a trained diffusion model, a REC algorithm, and a data point $\x$, we can perform progressive compression as follows \citep{ho2020denoising, theis2022lossy}: 
Initially, at time $T$, the sender transmits a sample of $q(\z_T|\x)$ under the prior $p(\z_T)$, using $L_T$ bits on average.
At each subsequent time step $t$, the sender transmits a sample of $q(\z_{t-1}|\z_t, \x)$ given the previously transmitted $\z_t$, under the (conditional) prior $p(\z_{t-1}|\z_t)$, using approximately $L_{t-1}$ bits.
Finally, given $\z_0$ at $t=0$, $\x$ can be transmitted losslessly under the model $p(\x | \z_0)$ by an entropy coding algorithm (e.g., arithmetic coding), with a codelength close to $L_{\x|\z_0}$ bits \citep[Chapter 13.1]{polyanskiy2022information}. Thus, the overall cost of losslessly compressing $\x$ sums up to $\cL(\x)$ bits, as in the NELBO in \eqref{eq:prog-nelbo}.
Crucially, at any time $t$, the receiver can use the most-recently-received $\z_t$ to obtain a \emph{lossy} 
data reconstruction $\hat{\x}_t$. %
For this, several options are possible: \citet{ho2020denoising} consider using the diffusion model's denoising prediction $\hat{\x}_\theta(\z_t)$, while \citet{theis2022lossy} consider sampling  $\hat{\x}_t \sim p(\x | \z_t)$, either by ancestral sampling or a probability flow ODE \citep{song2020score}. Note that if the reverse generative model captures the data distribution perfectly, then $\hat{\x}_t \sim p(\x | \z_t)$ follows the same marginal distribution as the data and has the desirable property of \emph{perfect realism}, i.e., being indistinguishable from real data \citep{theis2022lossy}.

\paragraph{Universal Quantization}

Although general-purpose REC algorithms suffer from exponential runtime \citep{agustsson2020uq, goc2024channel}, efficient REC algorithms exist if we are willing to restrict the kinds of target and coding distributions allowed \citep{flamich2022fast, flamich2024faster}. 
Here, we focus on the special case where the target distribution $q$ is given by a uniform noise channel, which is solved efficiently by Universal Quantization (UQ) \citep{roberts1962picture, zamir1992universal}.
Specifically, suppose we (the sender) have access to a scalar r.v. $Y \sim p_{Y}$, and would like to communicate a noise-perturbed version of it,
\[
\tilde{Y} = Y + U,
\]
where $U \sim \cU(-\D, \D)$ is an independent r.v. with a uniform distribution on the interval $[-\D, \D]$.
UQ accomplishes this as follows: \emph{Step 1.} Perturb $Y$ by adding another independent noise $U' \sim \cU(-\D, \D)$, and quantize the result to the closet quantization point $K$ on a uniform grid of width $\Delta$, i.e., computing $K := \Delta \lfloor \frac{Y + U' }{\Delta}\rceil$ where $\lfloor \cdot \rceil$ denotes rounding to the nearest integer. 
\emph{Step 2.} Entropy code and transmit $K$ under the conditional distribution of $K$ given $U'$. \emph{Step 3.}  The receiver draws the same $U'$ by using the same random number generator and obtains a reconstruction $\hat Y := K - U' = \Delta \lfloor \frac{Y + U' }{\Delta}\rceil - U'$. 
\citet{zamir1992universal} showed that $\hat Y$ indeed has the same distribution as $\tilde{Y}$, and the entropy coding cost of $K$ is related to the differential entropy of $\tilde Y$ via
$$
H[K|U'] = I(Y; \tilde{Y}) = h(\tilde Y) - \log(\DD).
$$

In the above, the optimal entropy coding distribution $\mathbb{P}(K|U'=u')$ is obtained by discretizing  $p_{\tilde Y} := p_Y \star \cU(-\D, \D)$ on a grid of width $\Delta$ and offset by $U'=u'$ \citep{zamir1992universal}, where $\star$ denotes convolution. 
If the true $p_{\tilde Y}$ is unknown, 
we can replace it with a surrogate density model $f_\theta(\tilde y)$ during entropy coding and incur a higher coding cost,
\begin{align}
    \mathbb{E}_{y \sim P_{Y}}[\KL{u(\cdot | y)}{f_\theta(\cdot)}] \geq I(Y; \tilde Y), \label{eq:uq-rate-ub}
\end{align}
where $u(\cdot | y)$ denotes the density function of the uniform noise channel $q_{\tilde Y| Y=y} = \cU(y-\D, y+\D)$. It can be shown that the optimal choice of $f_\theta$ is the convolution of $p_Y$ with $\cU(-\D, \D)$. Therefore, as in prior work \citep{agustsson2020uq, balle2018hyper}, 
we will choose $f_\theta$ to have the form of another underlying density model $g_\theta$ convolved with uniform noise, i.e.
\begin{align}
    f_\theta(\cdot) = g_\theta(\cdot) \star \cU(\cdot\,; -\D, \D). \label{eq-uq_f}
\end{align}

\section{Universally Quantized Diffusion Models} \label{sec:method}
We follow the same conceptual framework of progressive compression with diffusion models as in \citep{ho2020denoising, theis2022lossy}, reviewed in the previous section. 
While \citet{theis2022lossy} use Gaussian diffusion, relying on the communication of Gaussian samples which remains intractable in higher dimensions, we want to apply UQ to similarly achieve a compression cost given by the NELBO, while remaining computationally efficient. 
We therefore introduce a new model with a modified forward process and reverse process, which we term \emph{universally quantized diffusion model} (UQDM), substituting Gaussian noise channels for uniform noise channels.

\subsection{Forward process}\label{sec:method-forward-proc}

The forward process of a standard diffusion model is often given by the transition kernel $q(\z_{t+1}|\z_t)$ \citep{ho2020denoising} or perturbation kernel $q(\z_{t}|\x)$ \citep{kingma2021variational}, which in turn determines the conditional (reverse-time) distributions $q(\z_T|\x)$ and $\{q(\z_{t-1}|\z_t, \x) | t=1, ..., T\}$ appearing in the NELBO in \eqref{eq:prog-nelbo}.
As we are interested in operationalizing and optimizing the coding cost associated with \eqref{eq:prog-nelbo}, we will directly specify these conditional distributions to be compatible with UQ, rather than deriving them from a transition/perturbation kernel.
We thus specify the forward process with the same factorization as in DDIM \citep{song2021denoising} via
$q(\z_{0:T}|\x) = q(\z_T|\x) \prod_{t=1}^T q(\z_{t-1}|\z_t, \x)$, and consider a discrete-time non-Markovian process as follows, 
\begin{align}
\begin{cases}
    q(\z_T | \x) :=\; \mathcal{N}(\alpha_T \x, \sigma^2_T \bI), \\
    q(\z_{t-1} | \z_t, \x) :=\; \cU \left( b(t) \z_t + c(t)\x - \frac{\Delta(t)}{2}, b(t) \z_t + c(t)\x + \frac{\Delta(t)}{2}  \right), t = 1, 2,...,T,   
\end{cases}\label{eq:uqdm_forward_zt}
\end{align}
where $b(t)$, $c(t)$, and $\Delta(t)$ are scalar-valued functions of time. Note that unlike in Gaussian diffusion, 
our $q(\z_{t-1} | \z_t, \x)$ is chosen to be a uniform distribution so that it can be efficiently simulated with UQ (as a result, our $q(\z_{t} | \x)$ for any $t \neq T$ does not admit a simple distributional form). There is freedom in these choices of the forward process, but for simplicity we base them closely on the Gaussian case: we choose a standard isotropic Gaussian $q(\z_T | \x)$, and set $b(t)$, $c(t)$, $\Delta(t)$ so that $q(\z_{t-1} | \z_t, \x)$ has the same mean and variance as in the Gaussian case (see Appendix \ref{sec:app-forward} for more details):
\begin{align*}
    b(t) = \frac{\alpha_t}{\alpha_{t-1}}\frac{\sigma^2_{t-1}}{\sigma^2_t} ,\,
    c(t) = \sigma^2_{t|t-1} \frac{\alpha_{t-1}}{\sigma^2_t}
    ,\,
    \Delta(t) = \sqrt{12} \sigma_{t|t-1} \frac{\sigma_{t-1}}{\sigma_t},  \text{  with } \sigma^2_{t|t-1} := \sigma^2_t - \frac{\alpha^2_t}{\alpha^2_{t-1}} \sigma^2_{t-1}.
\end{align*}

We note here that $q(\z_t | \z_T, \x)$ can be written as a sum of uniform distributions, which as we increase $T \to \infty$, converges in distribution to a Gaussian by the Central Limit Theorem. Under the assumptions $\alpha_T = 0$ and $\sigma_T = 1$, the forward process 
 $q(\z_t | \x)$ therefore also converges to a Gaussian, showing that our forward process has the same underlying continuous-time limit as in VDM \citep{kingma2021variational}. See \Cref{sec:app-clt} for details and proof.

As in VDM \citep{kingma2021variational}, the forward process schedules (i.e., $\alpha_t$ and $\sigma_t$, as well as $b(t), c(t), \Delta(t)$) can be learned end-to-end, e.g., by parameterizing $\sigma_t^2 = \operatorname{sigmoid}(\phi(t))$, where $\phi$ is a monotonic neural network. We did not find this to yield significant improvements compared to using a linear noise schedule similar to the one in \citep{kingma2021variational}. %

\subsection{Backward process}\label{sec:method-backward-proc}

Analogously to the Gaussian case, we want to define a conditional distribution $p(\z_{t-1} | \z_t)$ that leverages a denoising model $\hat{\x}_t = \hat{\x}_\theta(\z_t, t)$ and closely matches the forward ``posterior'' $q(\z_{t-1} | \z_t, \x)$. %
In our case, the forward ``posterior'' corresponds to a uniform noise channel with width $\Delta(t)$, i.e., $\z_{t-1} = b(t) \z_t + c(t) \x + \Delta(t) \mathbf{u}_t,  \mathbf{u}_t \sim \cU(-\nicefrac{\bone}{\btwo}, \nicefrac{\bone}{\btwo})$; to simulate it with UQ, we choose a density model for $\z_{t-1}$ with the same form as the convolution in \eqref{eq-uq_f}. 
Specifically, we let
\begin{align}
    p(\z_{t-1} | \z_t) = g_\theta(\z_{t-1}; \z_t, t) \star \cU(-\nicefrac{\Delta(t)}{\btwo}, \nicefrac{\Delta(t)}{\btwo}), \label{eq:reverse-convolution-density}
\end{align}
where $g_\theta(\z_{t-1}; \z_t, t)$ is a learned density chosen to match $q(\z_{t-1} | \z_t, \x)$. 
Recall in Gaussian diffusion \citep{kingma2021variational}, $p(\z_{t-1} | \z_t)$ is chosen to be a Gaussian of the form $q(\z_{t-1}|\z_t, \x = \hat{\x}_\theta(\z_t; t))$, i.e., the same as $q(\z_{t-1} | \z_t, \x)$ but with the original data $\x$ replaced by a denoised prediction $\x = \hat{\x}_\theta(\z_t; t)$. For simplicity, we base $g_\theta$ closely on the choice of $p(\z_{t-1} | \z_t)$ in Gaussian diffusion, e.g., 
\begin{align}
    g_\theta(\z_{t-1}; \z_t, t) &= \cN(b(t) \z_t + c(t) \hat{\x}_\theta(\z_t; t), \sigma_Q^2(t) \bI)
\end{align}
or a logistic distribution with the same mean and variance,
\begin{align}
 g_\theta(\z_{t-1}; \z_t, t) &= \operatorname{Logistic}\left(b(t) \z_t + c(t) \hat{\x}_\theta(\z_t; t), \sigma_Q^2(t) \bI \right).
\end{align}
where $\sigma_Q^2(t)$ is the variance of the Gaussian forward ``posterior'', and we use the same noise-prediction network for $\hat{\x}_\theta$ as in \citep{kingma2021variational}. We found the Gaussian and logistic distributions to give similar results, but the logistic to be numerically more stable and therefore adopt it in all our experiments.

Inspired by \citep{nichol2021improved}, we found that learning a per-coordinate variance in the reverse process to significantly improve the log-likelihood, which we demonstrate in Sec.~\ref{sec:experiments}. 
In practice, this is implemented by doubling the output dimension of the score network to also compute a tensor of scaling factors $\mathbf{s}_\theta(\z_t)$, so that the variance of $g_\theta$ is $\boldsymbol{\sigma}_\theta^2 = \sigma_Q^2(t) \odot \mathbf{s}_\theta(\z_t)$.
Refer to \Cref{app:rate-estimates} for a more detailed analysis of the log-likelihood and how a learned variance is beneficial.

We note that other possibilities for $g_\theta$ exist besides Gaussian or logistic, e.g., mixture distributions \citep{cheng2020learned}, which trade off higher computation cost for increased modeling power. Analyzing the time reversal of the our forward process, similarly to \citep{song2021denoising}, may also suggest better choices of the reverse-time density model $g_\theta$. We leave these explorations to future work.

\eat{
OLD

To adopt universal quantization, we require the conditional to be a convolution with a uniform distribution, as in \eqref{eq-uq_f}. We set
\begin{align*}
    p(\z_{t-1} | \z_t)_i &\coloneqq g_t(z) \star \cU(z; -\Dt, \Dt) \\
    &= \frac{1}{\DD_t} (G_t(z + \Dt) - G_t(z - \Dt),
\end{align*}
where $\DDt = \Delta(t)$ and $g_t(z) = g_t(z; \hat{\x}_t)$ is the pdf of a continuous distribution with cdf $G_t$.
The Gaussian form of $p(\z_{t-1} | \z_t)$ in diffusion models is motivated by the time reversal of a forward-process SDE \citep{song2021denoising}. A similar continuous-time analysis of our forward process \eqref{eq-uqdm_forward} may yield insight on the optimal choice of the density $g_t$, but in this work we make the simple design choice of letting $g_t$ be a Gaussian or Logistic distribution with the same form of mean and same variance as that of $q(\z_{t-1} | \z_t, \x = \hat{\x}_\theta(\z_t, t))$ in VDM. 
In particular, we let $g_t$ have mean
\begin{align}
    \boldsymbol{\mu}_\theta = b(t) \z_t + c(t) \hat{\x}_\theta(\z_t; t).
\end{align}
and variance ${\sigma}^2_Q(t)$.

These choices allow us to obtain an approximation of the KL divergence $L_{t-1}$, which is lower bounded by $\frac{1}{3} \log(2)$ and verified to hold empirically. This suggests that the overall sum in the loss \eqref{eq:prog-nelbo} may not converge as $T \to \infty$ unlike in Gaussian diffusion. 
Therefore, to make the reverse process better fit the forward (posterior) process, we also introduce a learned per-coordinate scaling factor $\mathbf{s}_\theta(\z_t)$, so that the variance of $g_t$ is
\begin{align}
    \boldsymbol{\sigma}_\theta^2 = \sigma_Q^2(t) \odot \mathbf{s}_\theta(\z_t).
\end{align} \label{eq:learned-rev-variance}
In practice, this is implemented by doubling the output dimension of the score network that computes $\hat{\x}_t$.
}
We adopt the same form of categorical likelihood model $p(\x | \z_0)$ as in VDM \citep{kingma2021variational}, as well as the use of Fourier features.

\begin{minipage}[t]{0.45\textwidth}
\begin{algorithm}[H]
\caption{Encoding}\label{alg:encode}
\begin{algorithmic}
    \State $\z_T \sim p(\z_T)$
    \For{$t = T, \dots, 2, 1$}
        \State Compute the parameters of $p(\z_{t-1} | \z_t)$
        \State $\triangleright$ Send $\z_{t-1} \sim q(\z_{t-1} | \z_t, \x)$ with UQ:
        \State $\bu_t \sim \cU(-\nicefrac{\bone}{\btwo}, \nicefrac{\bone}{\btwo})$
        \State $ \rvk_t = \lfloor (b(t) \z_t + c(t) \x) / \Delta(t) + \bu_t \rceil$
        \State Entropy-code $\rvk_t$ using $p(\z_{t-1} | \z_t)$
        \State $\z_{t-1} = \Delta(t)(\rvk_t - \bu_t)$
        \State
    \EndFor
    \State Entropy-code $\x$ with $p(\x | \z_0)$
\end{algorithmic}
\end{algorithm}
\end{minipage}
\hfill
\begin{minipage}[t]{0.52\textwidth}
\begin{algorithm}[H]
\centering
\caption{Decoding}\label{alg:decode}
\begin{algorithmic}
    \State $\z_T \sim p(\z_T)$ \Comment{Using shared seed}
    \For{$t = T, \dots, 2, 1$}
        \State Compute the parameters of $p(\z_{t-1} | \z_t)$
        \State 
        \State $\bu_t \sim \cU(-\nicefrac{\bone}{\btwo}, \nicefrac{\bone}{\btwo})$
        \Comment{Using shared seed} 
        \State
        \State Entropy-decode $\rvk_t$ using $p(\z_{t-1} | \z_t)$
        \State $\z_{t-1} = \Delta(t)(\rvk_t - \bu_t)$
        \State $\hat{\x}_t = \hat{\x}_\theta(\z_{t-1}; t-1)$
        \Comment{Lossy reconstruction}
    \EndFor
    \State Entropy-decode $\x$ with $p(\x | \z_0)$
    \Comment{Lossless}
\end{algorithmic}
\end{algorithm}
\end{minipage}

\subsection{Progressive coding}

Given a UQDM trained on the NELBO in \eqref{eq:prog-nelbo}, we can use it for progressive compression similarly to \citep{ho2020denoising, theis2022lossy}, outlined in \Cref{sec:background}.

The initial step $t=T$ involves transmitting a Gaussian $\z_T$. Since we do not assume access to an efficient REC scheme for the Gaussian channel, we will instead draw the same $\z_T \sim p(\z_T) = \mathcal{N}(\mathbf{0}, \mathbf{I})$ on both the encoder and decoder side, with the help of a shared pseudo-random seed.\footnote{This corresponds to a trivial REC problem where a sample from $q=p$ can be transmitted using $KL(q\|p)=0$ bits.}  To avoid a train/compression mismatch, we therefore always ensure $q(\z_T|\x) \approx p(\z_T)$ and hence $L_T \approx 0$.
At any subsequent step $t$, instead of sampling $\z_{t-1} = b(t) \z_t + c(t) \x + \Delta(t) \mathbf{u}'_t$ as in training, we apply UQ to compress the prior mean vector $\boldsymbol{\mu}_Q := b(t) \z_t + c(t) \x $. Specifically the sender draws $\bu_t \sim \cU(-\nicefrac{\bone}{\btwo}, \nicefrac{\bone}{\btwo})$, computes $\mathbf{k}_t = \lfloor \frac{\boldsymbol{\mu}_Q}{\Delta(t)} + \bu_t \rceil$, entropy codes/transmits $\mathbf{k}_t$ under the discretized $p(\z_{t-1}|\z_t)$; the receiver recovers $\mathbf{k}_t$, draws the same $\bu_t \sim \cU(-\nicefrac{\bone}{\btwo}, \nicefrac{\bone}{\btwo})$, and sets $\z_{t-1} = \Delta(t)(\mathbf{k}_t - \bu_t)$. Finally, having transmitted $\z_0$, $\x$ is losslessly compressed using the entropy model $p(\x|\z_0)$.
Pseudocode can be found in \Cref{alg:encode,alg:decode}. 
Note that we can replace the denoised prediction $\hat{\x}_t = \hat{\x}_\theta(\z_{t-1}; t-1)$ with more sophisticated ways to obtain lossy reconstructions such as flow-based reconstruction or ancestral sampling \citep{theis2022lossy}. As our method is progressive, the algorithm can be stopped at any time and the most recent lossy reconstruction be used as the output. Compared to compression with VDM, the main difference is that we transmit $\z_{t-1} \sim q(\z_{t-1}|\z_t, \x)$ under $p(\z_{t-1}|\z_t)$ using UQ instead of Gaussian channel simulation; the overall computation complexity is now dominated by the evaluation of the denoising network $\hat{\x}_\theta$ (for computing the parameters of $p(\z_{t-1} | \z_t)$), which scales linearly with the number of time steps.

We implemented the progressive codec using \texttt{tensorflow-compression} \citep{tensorflow-compression}, and found the actual file size to be within $3\%$ of the theoretical NELBO. 

\section{Related Work}
\label{sec:related}

Diffusion models \citep{sohl2015deep} have achieved impressive results on image generation \citep{ho2020denoising, song2021denoising} and density estimation  
\citep{kingma2021variational, nichol2021improved}. Our work  is closely based on the latent-variable formalism of diffusion models \citep{ho2020denoising, kingma2021variational}, with our forward and backward processes adapted from the Gaussian case. Our forward process is non-Markovian like DDIM \citep{song2021denoising}, and our reverse process uses learned variance, inspired by \citep{nichol2021improved}. 
Recent research has focused on efficient sampling \citep{song2021denoising, pandey2023efficient} and better scalability via latent diffusion \citep{rombach2022high}, consistency models \citep{song2023consistency},  and distillation \citep{sauer2024fast}, whereas we focus on the compression task. Related to our approach, cold diffusion \citep{bansal2024cold} showed that alternative forward processes other than the Gaussian still produce good image generation results.

Several diffusion-based neural compression methods exist, but they use conditional diffusion models \citep{yang2024lossy, careil2023towards, hoogeboom2023high} which do not permit progressive decoding. Furthermore, they are also less flexible as a separate model has to be trained for each bitrate. Progressive neural compression has so far been mostly achieved by combining non-linear transform coding (for example using a VAE) with progressive quantization schemes. %
Such methods include PLONQ \citep{lu2021progressive}, which uses nested quantization, DPICT \citep{lee2022dpict} and its extension CTC \citep{jeon2023context}, which use trit-plane coding, and DeepHQ \citep{lee2024deephq} which uses a learned quantization scheme.
Finally, codecs based on hierarchical VAEs \citep{townsend2024hilloc, duan2023lossy} are closely related but do not directly target the realism criterion.

\section{Experiments}
\label{sec:experiments}

We train UQDM end-to-end by directly optimizing the NELBO loss \eqref{eq:prog-nelbo}, summing up $L_t$ across all time steps.  This involves simulating the entire forward process $\{\z_0, ..., \z_T\}$ according to \eqref{eq:uqdm_forward_zt} and can be computationally expensive when $T$ is large but can be avoided by using a Monte-Carlo estimate based on a single $L_t$ as in the diffusion literature. 
Our current experiments found a small $T$ to give the best compression performance, and therefore leave the investigation of training with a single-step Monte-Carlo objective to future work. Note that this would require sampling from the marginal distribution $q(\z_t|\x)$, which becomes approximately Gaussian for large $t$ (see Sec.~\ref{sec:method-forward-proc}).

When considering the progressive compression performance of VDM and UQDM, we consider three ways of computing progressive reconstructions from $\z_t$: \texttt{denoise}, where $\hat{\x} = \hat{\x}_\theta(\z_t; t)$ is the  prediction from the denoising network; \texttt{ancestral}, where $\hat{\x} \sim p(\x | \z_t)$ is drawn by ancestral sampling; and \texttt{flow-based} where $\hat{\x} \sim p(\x | \z_t)$ is computed deterministically using the probability flow ODE in \citep{theis2022lossy}. 
In Gaussian diffusion, the probability flow ODE produces the same trajectory of marginal distributions as ancestral sampling.
In the case of UQDM, we apply the same update equations and
observe similar benefits, likely due to the continuous-time equivalence %
of the underlying processes of UQDM and VDM. See \Cref{sec:app-flow} for details.
Note that DiffC-A and DiffC-F \citep{theis2022lossy} directly correspond to our VDM results with \texttt{ancestral} and \texttt{flow-based} reconstructions. 

In all experiments involving VDM and UQDM, we always use the same denoising U-net architecture for both, except UQDM uses twice as many output dimensions %
to additionally predict the reverse-process variance (see Sec.~\ref{sec:method}). We refer to Appendix Sec.~\ref{sec:app-experiments} for further experiment details. 

\subsection{Swirl Toy Data}\label{sec:expm-swirl}

We obtain initial insights into the behavior of our proposed UQDM by experimenting on toy swirl data (see \Cref{sec:app-swirl} for details) and comparing with the hypothetical performance of VDM \citep{kingma2021variational}. 

First, we train UQDM end-to-end for various values of $T \in \{3, 4, 5, 10, 15, 20, 30\}$, with and without learning the reverse process variance. For comparison, we also train a single VDM with $T=1000$, but compute the progressive-coding NELBO \eqref{eq:prog-nelbo} using different $T$. Fig.~\ref{fig:toy_results} plots the resulting NELBO values, corresponding to the bits-per-dimension cost of lossless compression.
We observe that for UQDM, learning the reverse-process variance significantly improves the NELBO across all $T$, and a higher $T$ is not necessarily better. In fact, there seems to be an optimal $T \approx 5$, for which we obtain a bpd of around 8. The theoretical performance of VDM, by comparison, monotonically improves with $T$ (green curve) until it converges to a bpd of 5.8 at $T=1000$, as consistent with theory \citep{kingma2021variational}. 
We also tried initializing a UQDM with fixed reverse-process variance to the pre-trained VDM weights; interestingly, this resulted in very similar performance to the end-to-end trained result (blue curve), and further finetuning gave little to no improvement.

We then examine the lossy compression performance of progressive coding.
Here, we train UQDM end-to-end with learned reverse-process variances, and perform progressive reconstruction by ancestral sampling.
Figure~\ref{fig:toy_results} plots the results in fidelity v.s. bit-rate and realism v.s. bit-rate.
For reference, we also show the theoretical performance of VDM using $T=100$ discretization steps, assuming a hypothetical REC algorithm that operates with no overhead. 
The results are consistent with those on lossless compression,  with a similar performance ranking for $T$ among UQDM, and a gap remains to the hypothetical performance of VDM. 

Finally, we examine the quality of unconditional samples from UQDM with varying $T$. Although our earlier results indicate worse compression performance for $T > 5$,  Figure~\ref{fig:uqdm-uncond-samples} shows that UQDM's sample quality monotonically improves with increasing $T$. %

\begin{figure}[t]%
    \centering
    \subfloat{{\includegraphics[width=0.3\textwidth]{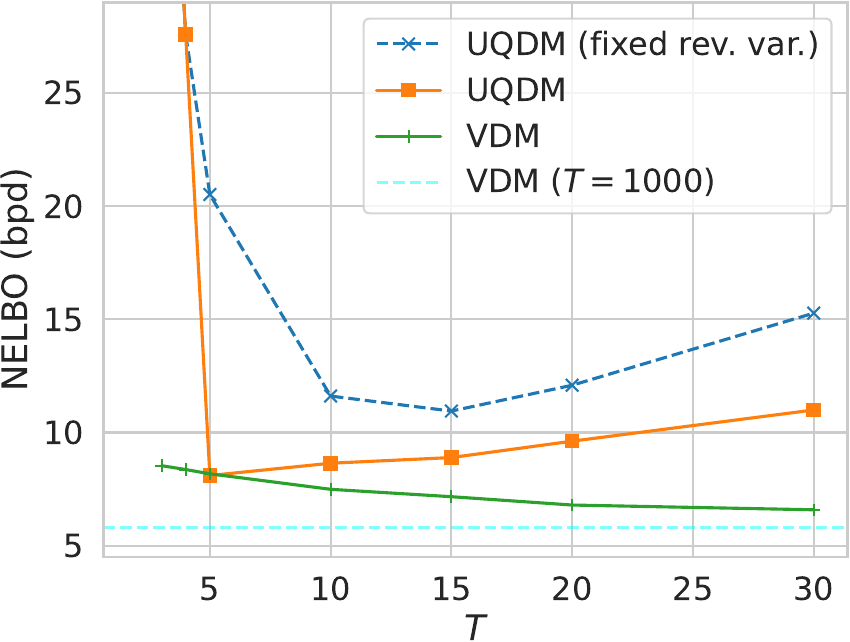}}}%
    \hfill
    \subfloat{{\includegraphics[width=0.3\textwidth]{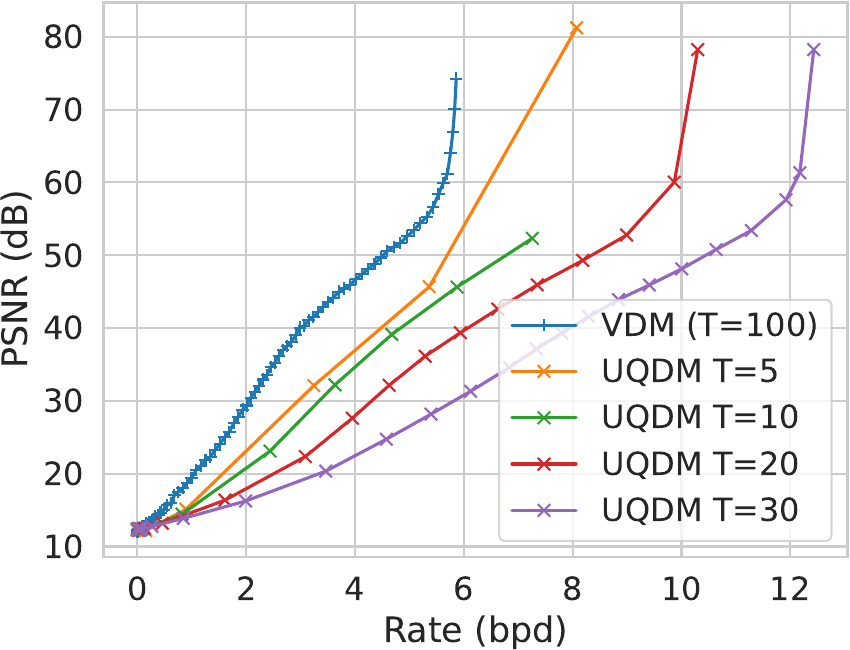}}}%
    \hfill
    \subfloat{{\includegraphics[width=0.31\textwidth]{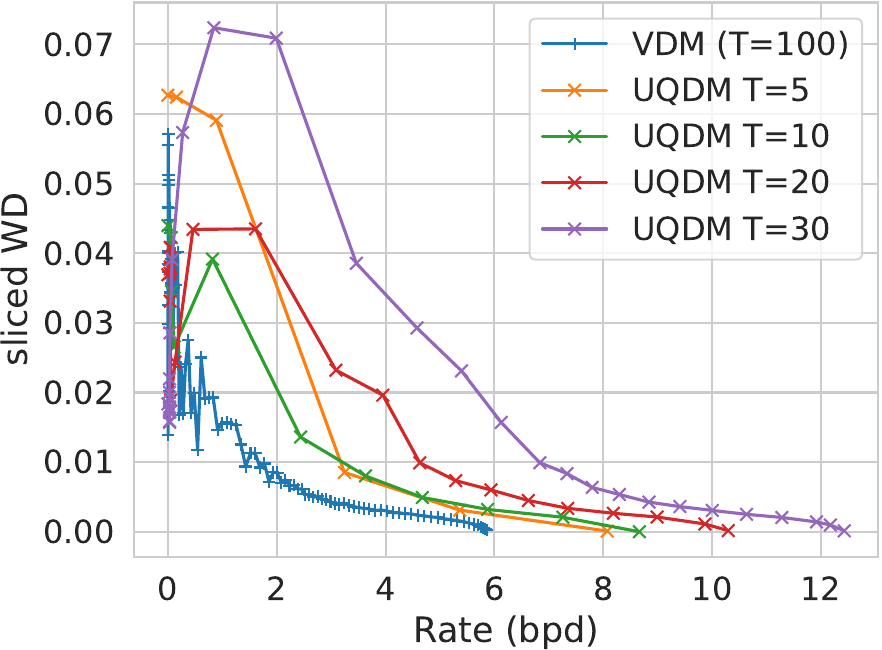}}}
    \caption{Results on swirl data. The VDM curves correspond to the hypothetical performance of REC that remains computationally intractable. \textbf{Left}:
    Lossless compression rates v.s. the choice of $T$, for UQDM with/without learned reverse-process variance (blue/orange) and VDM (green). For UQDM, learning the reverse-process variance significantly improved the NELBO, and an optimal $T\approx 5$.
    \textbf{Middle, Right}: Progressive lossy compression performance for VDM and UQDM, measured in fidelity (PSNR) v.s. bit-rate (middle), or realism (sliced Wasserstein distance) v.s. bit-rate (right). 
    }%
    \label{fig:toy_results}%
\end{figure}

\subsection{CIFAR10}

\begin{figure}%
    \centering
    \subfloat{{\includegraphics[width=0.48\textwidth]{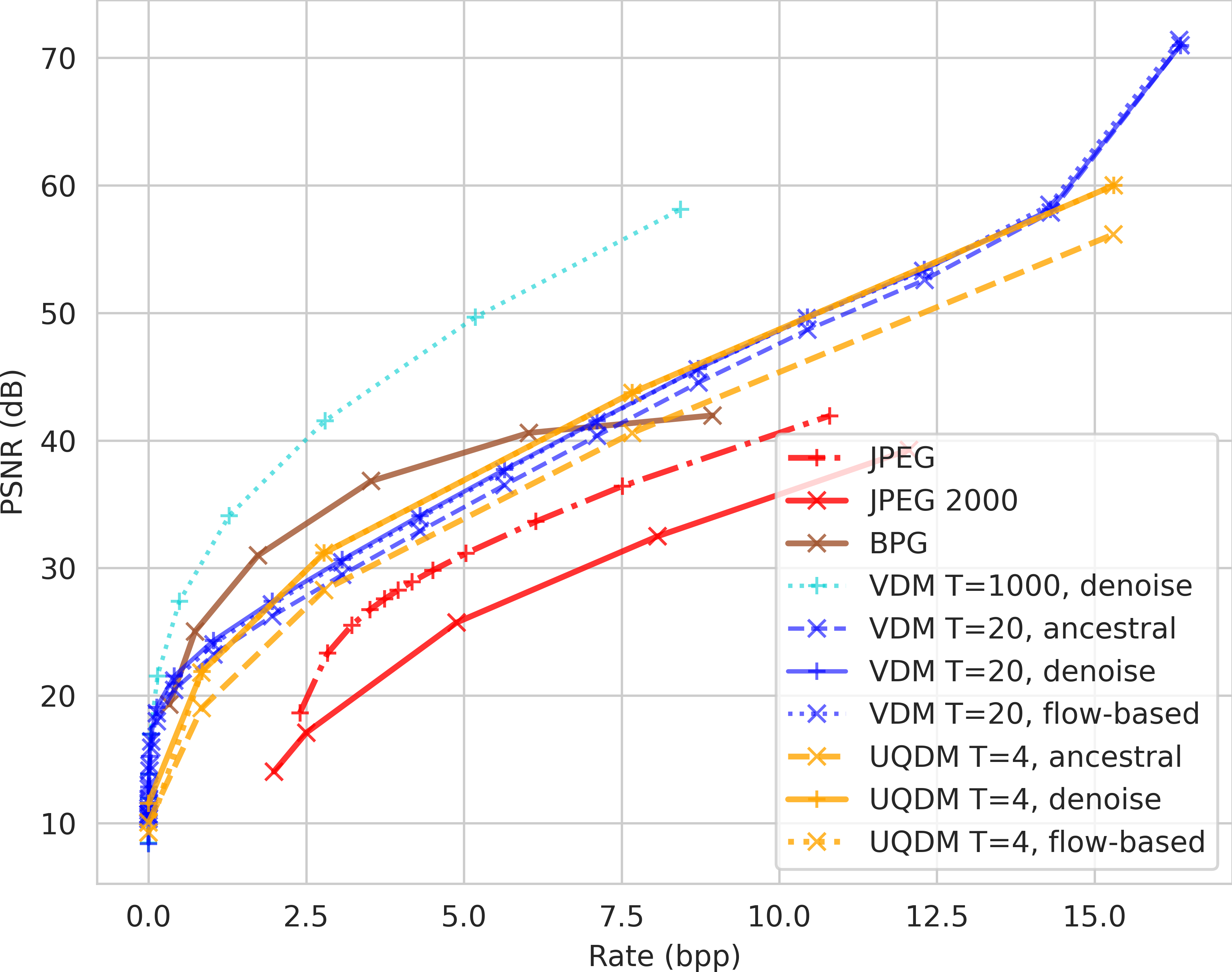} }}%
    \hfill
    \subfloat{{\includegraphics[width=0.48\textwidth]{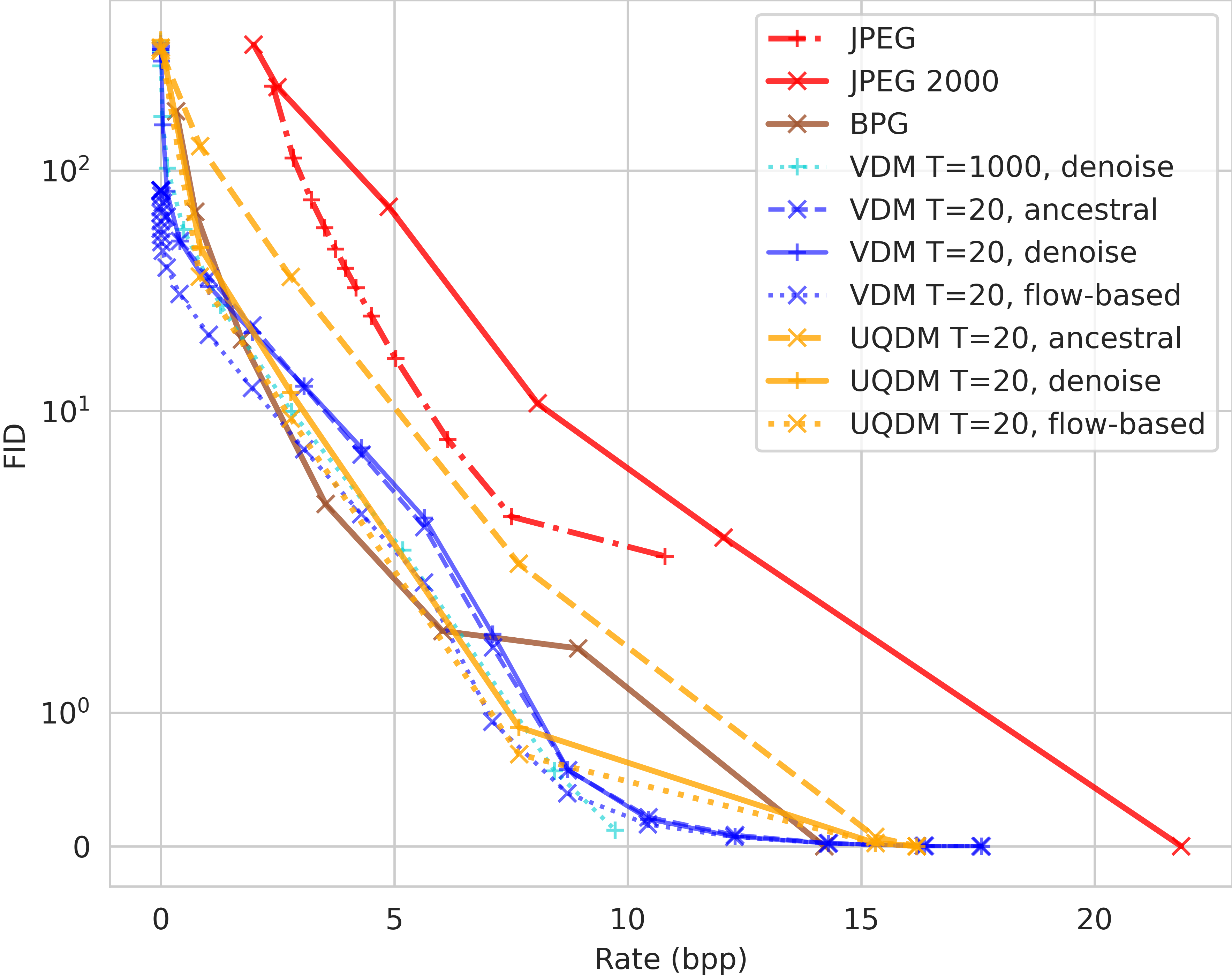} }}%
    \caption{Progressive lossy compression performance of UQDM on the CIFAR10 dataset, comparing fidelity (PSNR) and realism (FID) with bit-rate per pixel (bpp), using either ancestral sampling or denoised prediction to obtain progressive reconstructions as indicated. The VDM curve corresponds to hypothetical performance of REC that is computationally intractable. We achieve better fidelity and realism than JPEG and JPEG2000 across all bit-rates and than BPG in the high bit-rate regime.}%
    \label{fig:cifar_results}%
\end{figure}

Next, we apply our method to natural images. We start with the CIFAR10 dataset containing $32\times32$ images. We train a baseline VDM model with a smaller architecture than that used by \citet{kingma2021variational}, converging to around 3 bits per dimension. We use the noise schedule $\sigma_t^2 = \mathrm{\sigmoid}(\gamma_t)$ where $\gamma_t$ is linear in $t$ with learned endpoints $\gamma_T$ and $\gamma_0$. For our UQDM model we empirically find that $T \approx 4$ yields the best trade-off between bit-rate and reconstruction quality. We train our model end-to-end on the progressive coding NELBO \eqref{eq:prog-nelbo} with learned reverse-process variances.

We compare against the wavelet-based codecs JPEG, JPEG2000, and BPG \citep{bpg}.
For JPEG and BPG we use a fixed set of quality levels and encode the images independently, for JPEG2000 we instead use its progressive compression mode that allows us to set the approximate size reduction in each quality layer and obtain a rate-distortion curve from one bit-stream.

As shown in \Cref{fig:cifar_results}, we consistently outperform both JPEG and JPEG2000 over all bit-rates and metrics. Even though BPG, a competitive non-progressive codec optimized for rate-distortion performance, achieves better reconstruction fidelity (as measured in PSNR) in the low bit-rate regime, our method closely matches BPG in realism (as measured in FID) and even beats BPG in PSNR at higher bit-rates.
The theoretical performance of compression with Gaussian diffusion (e.g., VDM) \citep{theis2022lossy}, especially with a high number of steps such as $T=1000$, is currently computationally infeasible, both due to the large number of neural function evaluations required, and due the intractable runtime of REC algorithms in the Gaussian case. 
Still, for reference we report theoretical results both for $T=1000$ and $T=20$, where the latter uses a smaller and more practical number of diffusion/progressive reconstruction steps.

\subsection{ImageNet 64 $\times$ 64}

\begin{figure}%
    \centering
    \subfloat{{\includegraphics[width=0.48\textwidth]{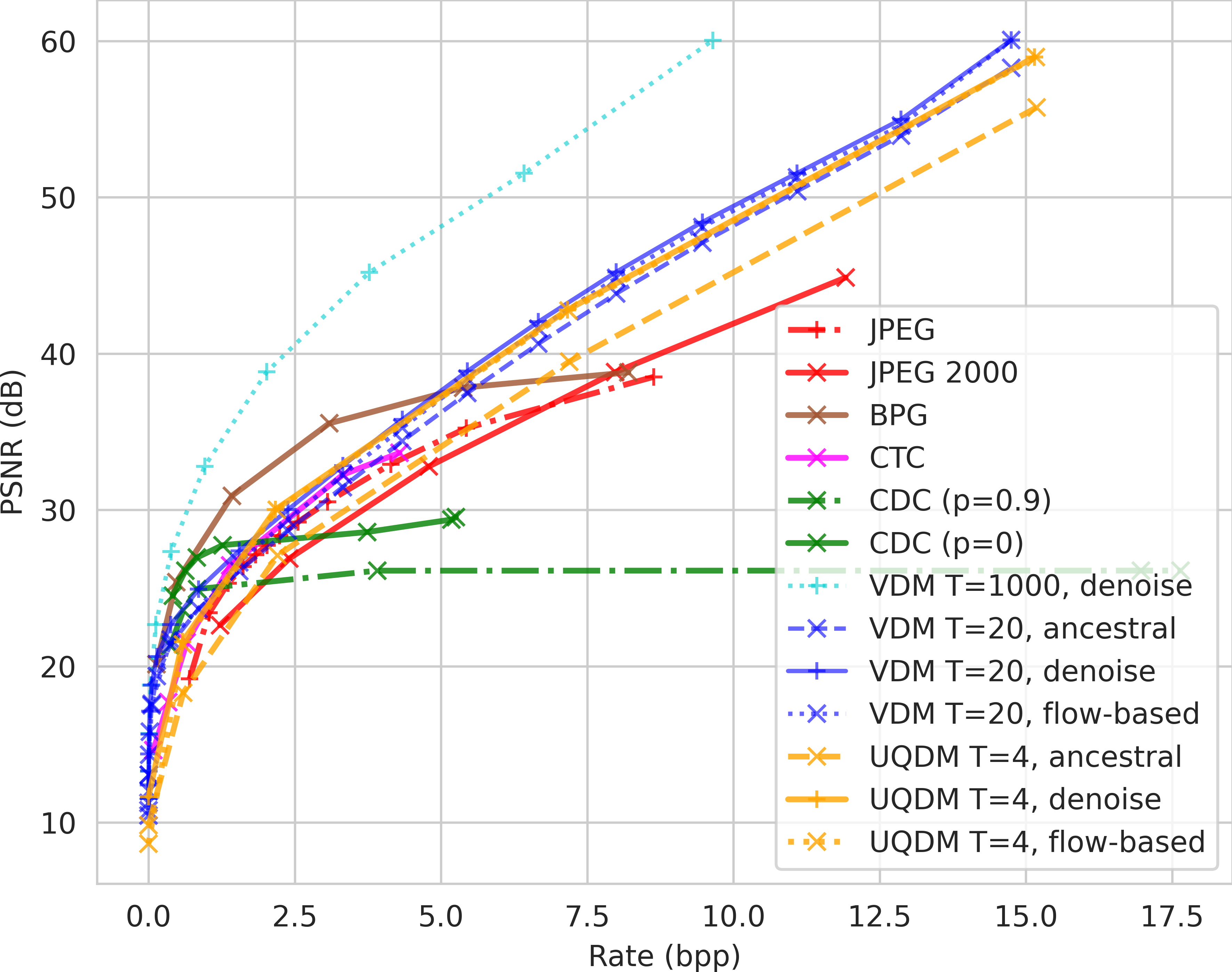} }}%
    \hfill
    \subfloat{{\includegraphics[width=0.48\textwidth]{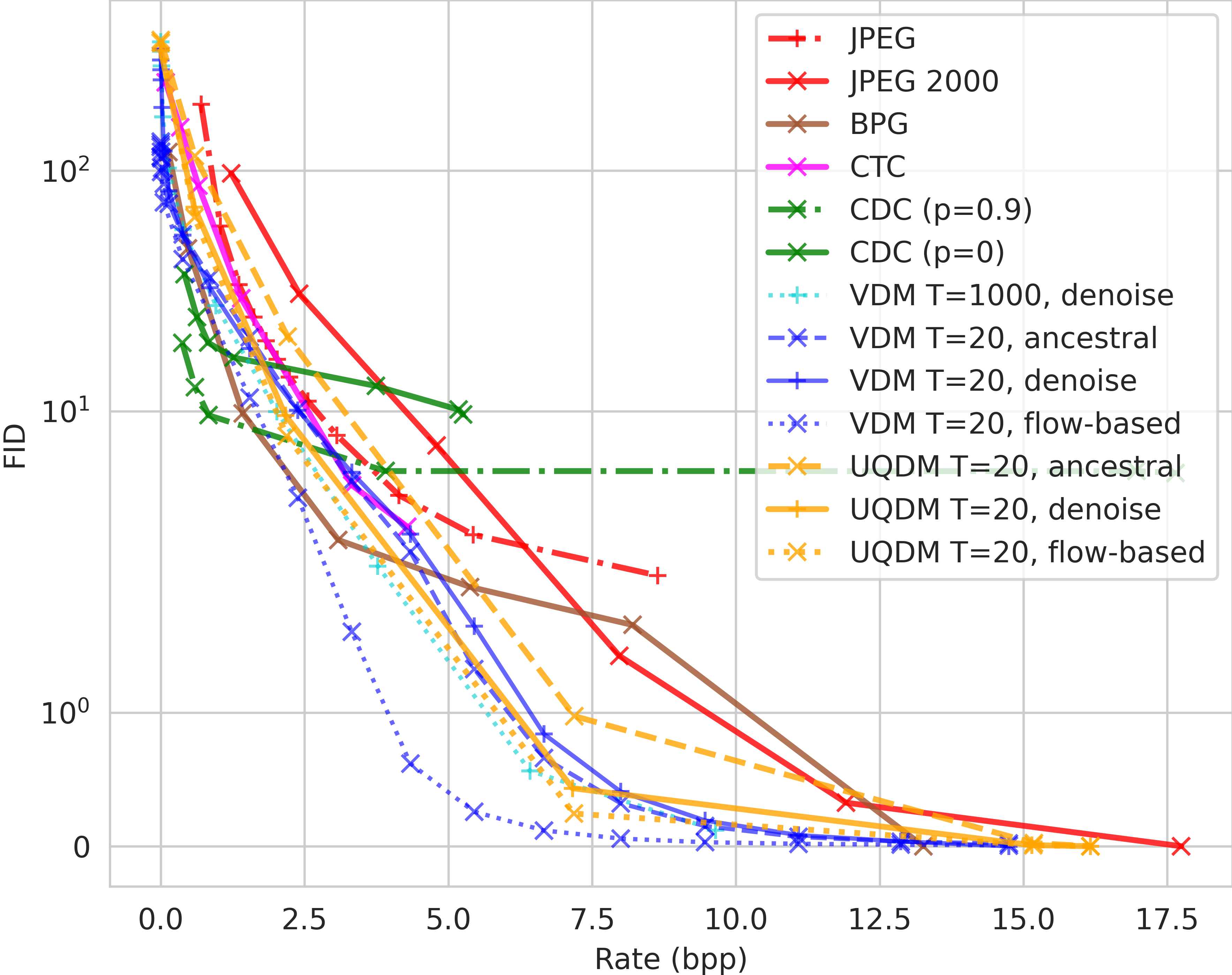} }}%
    \caption{Progressive lossy compression performance of UQDM on the Imagenet64 dataset, comparing fidelity (PSNR) and realism (FID) with bit-rate per pixel (bpp), using either ancestral sampling or the denoised prediction to obtain progressive reconstructions as indicated. The VDM curve corresponds to hypothetical performance of REC that remains computationally intractable. While the reconstruction quality of other codecs like CDC or BPG plateaus at higher bit-rates, our method continues to gradually improve fidelity and realism even at higher bit-rates where it achieves the best results of any baseline. We beat compression performance of JPEG, JPEG2000, and CTC across all bit-rates. Note that only UQDM, CTC, and JPEG2000 implement progressive coding.}%
    \label{fig:imagenet_results}%
\end{figure}

\begin{figure}
    \centering
    \subfloat{{\includegraphics[width=0.47\textwidth]{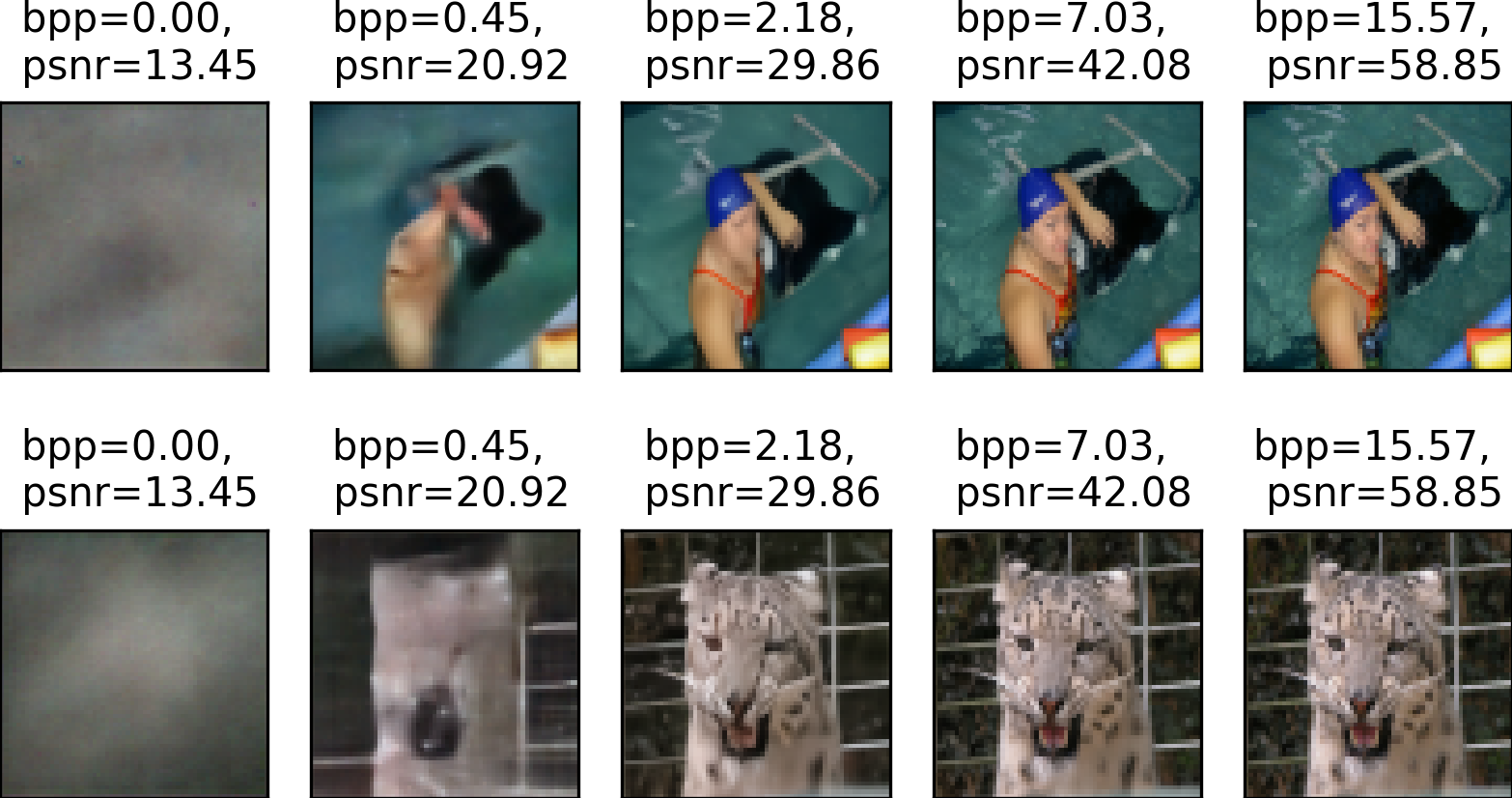}}}%
    \hfill
    \subfloat{{\includegraphics[width=0.47\textwidth]{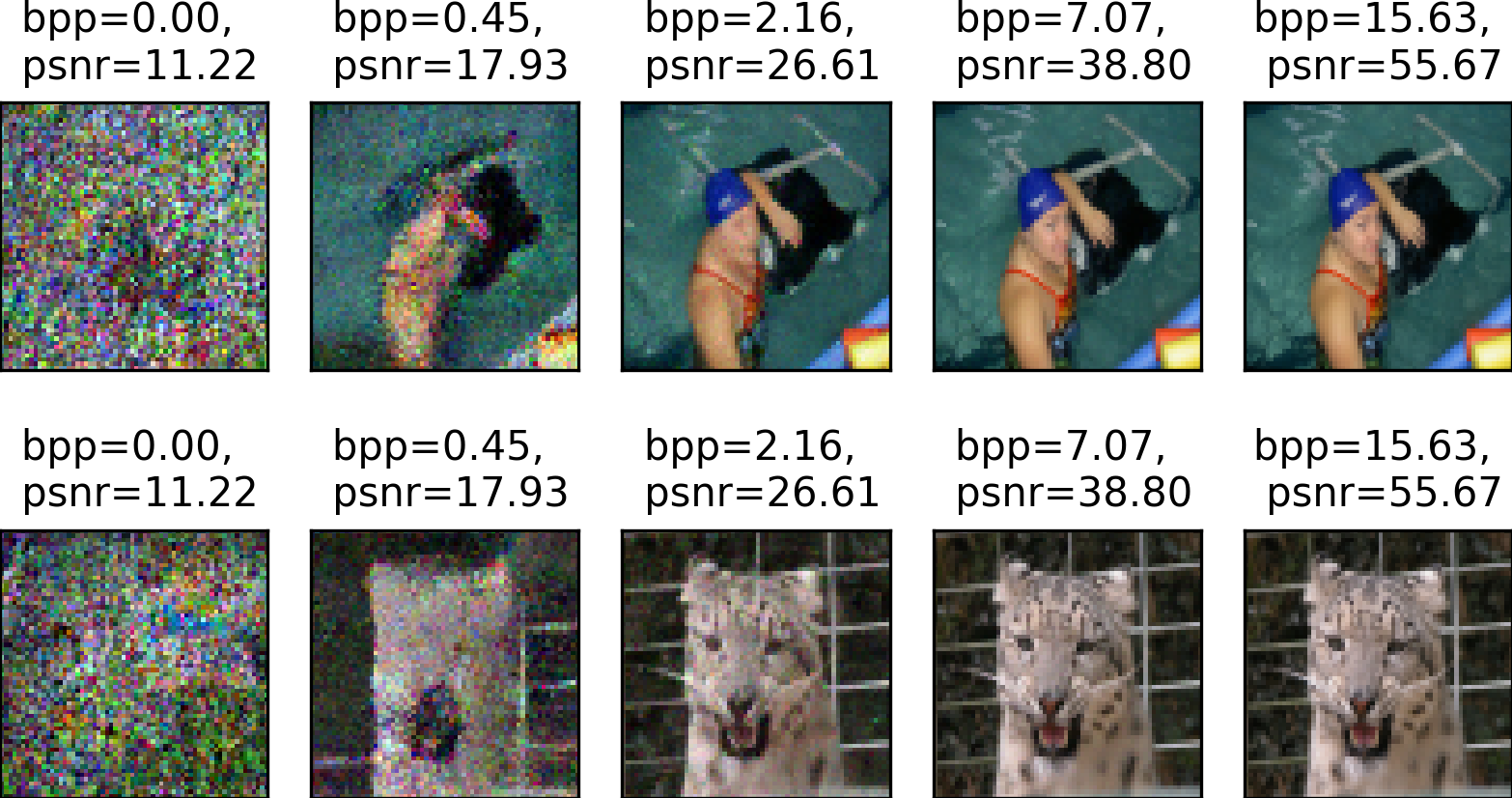}}}%
    \caption{Example progressive reconstructions from UQDM trained with $T=4$, obtained with denoised prediction (left) or ancestral sampling (right). The latter avoids blurriness but introduces graininess at low bit-rates, likely because the UQDM is unable to completely capture the data distribution and achieve perfect realism (perfect realism is also difficult to achieve also for Gaussian diffusion, as seen in the rate-realism plot of \citep{theis2022lossy}). Flow-based reconstructions are qualitatively similar to the denoising-based reconstructions and can be found in \Cref{fig:imagenet-more}.
    }
    \label{fig:imagenet-progressive}
\end{figure}

Finally, we present results on the ImageNet 64 $\times$ 64 dataset. We train a baseline VDM model with the same architecture as in \citep{kingma2021variational}, reproducing their reported BPD of around $3.4$; we train a UQDM of the same architecture with learned reverse-process variances and $T=4$.
In addition to the baselines described in the previous section, we also compare with CTC \citep{jeon2023context}, a recent progressive neural codec, and CDC \citep{yang2024lossy}, a non-progressive neural codec based on a conditional diffusion model that can trade-off between distortion and realism via a hyperparameter $p$. We separately report results for both $p=0$, which purely optimizes the conditional diffusion objective, and $p=0.9$, which prioritizes more realistic reconstructions that also jointly minimizes a perceptual loss. 
For CTC we use pre-trained model checkpoints from the official implementation \citep{jeon2023context}; for CDC we fix the architecture but train a new model for each bit-rate v.s. reconstruction quality/realism trade-off.

The results are shown in \Cref{fig:imagenet_results}. When obtaining progressive reconstructions from denoised predictions, UQDM again outperforms both JPEG and JPEG2000. Our results are comparable to, if not slightly better than, CTC, and even though the reconstruction quality of other codecs plateaus at higher bit-rates, our method continues to improve quality and realism gradually, even at higher bit-rates. Refer to \Cref{fig:imagenet-qualitative,fig:imagenet-progressive,fig:imagenet-more} for qualitative results demonstrating progressive coding and comparison across codecs.
At high bit-rates, UQDM preserves details better than other neural codecs.
UQDM with denoised predictions tends to introduce blurriness, while ancestral sampling introduces graininess at low bit-rates, likely because the UQDM is unable to completely capture the data distribution and achieve perfect realism. Flow-based denoising matches the distortion of denoised predictions but achieves significantly higher realism as measured by FID.
We note that the ideal of perfect realism (i.e., achieving 0 divergence between the data distribution and model's distribution) remains a challenge even for state-of-the-art diffusion models. 

\section{Discussion}

In this paper, we presented a new progressive coding scheme based on a novel adaptation of the standard diffusion model.
Our universally quantized diffusion model (UQDM) implements the idea of progressive compression with an unconditional diffusion model \citep{theis2022lossy} but bypasses the intractability of Gaussian channel simulation  by using universal quantization \citep{zamir1992universal} instead.
We present promising first results that match or outperform classic and neural compression baselines, including a recent progressive neural image compression method \citep{jeon2023context}.
Given the practical advantages of a progressive neural codec -- allowing for dynamic trade-offs between rate, distortion and computation, support for both lossy and lossless compression, and potential for high realism, all in a single model -- our approach brings neural compression a step closer towards real-world deployment.

Future work may further improve our approach to close the performance gap to Gaussian diffusion; the latter represents the ideal lossy compression performance under a perfect realism constraint for an approximately Gaussian-distributed data source \citep{theis2022lossy}.  
This may require more sophisticated methods for computing progressive reconstructions that can achieve higher quality with fewer steps, or exploring different parameterizations of the forward and reverse process with better theoretical properties. Finally, we expect further improvement in computation efficiency and scalability when combining our method with ideas such as latent diffusion \citep{rombach2022high}, distillation \citep{sauer2024fast}, or consistency models \citep{song2023consistency}.

\clearpage

\section*{Ethics Statement}
Our work focuses on the methodology of a learning-based data compression method, and thus has no direct ethical implications. 
The deployment of neural lossy compression however carries with it risks of miscommunication and misrepresentation \citep{yang2022introduction}, and needs to carefully analyzed and mitigated with future research.

\section*{Reproducibility Statement}
We include proofs for all theoretical results introduced in the main text in Appendix \ref{sec:app-forward} and ~\ref{sec:app-backward}. We include further experimental and implementation details (including model architectures and other hyperparameter choices) in Appendix \ref{sec:app-experiments}. We plan to release our full codebase upon paper acceptance.

\subsubsection*{Acknowledgments}

Justus Will and Yibo Yang acknowledge support from the HPI Research Center in Machine Learning and Data Science at UC Irvine. Stephan Mandt acknowledges support from the National Science Foundation (NSF) under an NSF CAREER Award IIS-2047418 and IIS-2007719, the NSF LEAP Center, by the Department of Energy under grant DE-SC0022331, the IARPA WRIVA program, the Hasso Plattner Research Center at UCI, the Chan Zuckerberg Initiative, and gifts from Qualcomm and Disney. We thank Kushagra Pandey for feedback on the manuscript.

\bibliography{iclr2025_conference}
\bibliographystyle{iclr2025_conference}

\newpage
\appendix
\section*{Appendix}
\section{Forward process details}
\label{sec:app-forward}

\subsection{Gaussian (DDPM/VDM)}

For completeness and reference, we restate the forward process and related conditionals given in \citep{kingma2021variational}.
The forward process is defined by 
\begin{align*}
    q(\z_t | \x) \coloneqq&\; \cN(\alpha_t \x, \sigma^2_t \bI),
\end{align*}
where $\alpha_t$ and $\sigma_t^2$ are positive scalar-valued functions of $t$. 
As in \citep{kingma2021variational}, we define the following notation shorthand which are used in the rest of the appendix:
for any $s < t$, let
\begin{align*}
    \alpha_{t|s} := \frac{\alpha_t}{\alpha_{s}} ,\quad
    \sigma^2_{t|s} := \sigma^2_t - \frac{\alpha^2_t}{\alpha^2_{s}} \sigma^2_s ,\quad
    b_{t|s} := \frac{\alpha_t}{\alpha_{s}}\frac{\sigma^2_{s}}{\sigma^2_t} ,\quad
    c_{t|s} := \sigma^2_{t|s} \frac{\alpha_{s}}{\sigma^2_t}
    ,\quad
    \beta_{t|s} := \sigma_{t|s} \frac{\sigma_s}{\sigma_t}.
\end{align*}

By properties of the Gaussian distribution, it can be shown that for any $0 \leq s < t \leq T$, 
\begin{align*}
    q(\z_t | \z_s) = &\; \cN(\alpha_{t|s} \x, \sigma^2_{t|s} \bI), \\
    q(\z_{s} | \z_t, \x) = &\; 
    \cN(b_{t|s} \z_t + c_{t|s} \x,  \beta_{t|s}^2 \bI ),
\end{align*}
In particular, 
\begin{align*}
    q(\z_{t-1} | \z_t, \x) = &\; 
    \cN(b_{t|t-1}\z_t + c_{t|t-1} \x,  \beta_{t|t-1}^2 \bI ), \\
    q(\z_{t} | \z_T, \x) = &\; \cN(b_{T|t}\z_t + c_{T|t} \x,  \beta_{T|t}^2 \bI ),
\end{align*}
and we can use the reparameterization trick to write
\begin{align*}
    \z_{t-1} =&\; b_{t|t-1} \, \z_t + c_{t|t-1} \, \x + \beta_{t|t-1} \, \veps_t, \, \veps_t \sim \cN(\bzero, \bI), \\
    \z_t =&\; b_{T|t} \, \z_T + c_{T|t} \, \x + \beta_{T|t} \, {\veps}_T, \, {\veps}_T \sim \cN(\bzero, \bI) 
\end{align*}

\subsection{Uniform (Ours)}

Our forward process is specified by $q(\z_T | \x)$ and $q(\z_{t-1} | \z_t, \x)$ for each $t$, and closely follows that of the Gaussian diffusion. We set $q(\z_T | \x)$ to be the same as in the Gaussian case, i.e.,
\begin{align*}
    q(\z_T | \x) \coloneqq&\; \cN(\alpha_T \x, \sigma^2_T \bI),
\end{align*}
and $q(\z_{t-1} | \z_t, \x)$ to be a uniform with the same mean and variance as in the Gaussian case, such that
\begin{align*}
    q(\z_{t-1} | \z_t, \x) \coloneqq&\; \cU(b_{t|t-1} \z_t + c_{t|t-1}\x - \sqrt{3} \beta_{t|t-1}, b_{t|t-1} \z_t + c_{t|t-1}\x + \sqrt{3} \beta_{t|t-1}
    ),
\end{align*}
or in other words,
\begin{align*}
    \z_{t-1} = b_{t|t-1} \z_t + c_{t|t-1}\x + \sqrt{12}  \beta_{t|t-1} \mathbf{u}_t,  \quad \mathbf{u}_t \sim \cU(-\nicefrac{\bone}{\btwo}, \nicefrac{\bone}{\btwo}).
\end{align*}

In the notation of \eqref{eq:uqdm_forward_zt} this corresponds to letting $b(t) = b_{t|t-1}$, $c(t) = c_{t|t-1}$, $\Delta(t) = \sqrt{12} \beta_{t|t-1}$. %
It follows by algebraic manipulation that
\begin{align}
    \z_t = b_{T|t} \, \z_T + c_{T|t} \, \x + \underbrace{\sum_{v=t+1}^{T} \sqrt{12} 
\delta_{v|t} \bu_{v}}_{\coloneqq {\boldsymbol{\omega}}_t} , \label{eq:uqdm-fwd-posterior-given-T}
\end{align}
where 
\[
\bu_{v} \sim \cU(-\nicefrac{\bone}{\btwo}, \nicefrac{\bone}{\btwo}), v=t+1, ..., T
\]
are independent uniform noise variables, and
$$
\delta_{v|t} := \beta_{v|v-1} \prod_{j=t+1}^{v-1} b_{j|j-1} = \frac{\sigma_t^2}{\alpha_t} \sqrt{\SNR(v-1) - \SNR(v)},
$$
where 
\[
\SNR(s) := \frac{\alpha_s^2}{\sigma_s^2}.
\]

It can be verified that
\begin{align*}
\Ex{\boldsymbol{\omega}_t} &= \mathbf{0},\\
\Var{\boldsymbol{\omega}_t} &= \sum_{v=t+1}^{T} \delta_{v|t}^2 \bI = \frac{\sigma_t^4}{\alpha^2_t} [\SNR(t) - \SNR(T)] \bI = \beta^2_{T|t} \bI,
\end{align*}
or in other words, at any step $t$ our forward-process ``posterior'' distribution $q(\z_{t} | \z_T, \x)$  has the same mean and variance as in the Gaussian case.

\subsection{Convergence to the Gaussian case}
\label{sec:app-clt}

We show that %
both forward processes
are equivalent in the continuous-time limit.
To allow comparison across different number of steps $T$, we suppose that $\alpha_t$ and $\sigma_t$ are obtained from continuous-time schedules $\alpha(\cdot): [0,1] \to \mathbb{R}^+$ and $\sigma(\cdot):[0,1] \to \mathbb{R}^+$ (which were fixed ahead of time), such that $\alpha_t := \alpha(t/T)$ and $\sigma_t := \sigma(t/T)$ for $t = 0, \dots, T$, for any choice of $T$.
As in VDM \citep{kingma2021variational}, we assume that the continuous-time signal-to-noise ratio $\snr(\cdot) := \alpha(\cdot)^2 / \sigma(\cdot)^2$ is strictly monotonically decreasing.

To obtain the continuous-time limit, we hold the ``continuous'' time $\rho := \frac{t}{T}$ fixed for some $\rho \in [0, 1)$, and let $T \to \infty$ (or equivalently, let the time discretization $\frac{1}{T} \to 0$). We note that the quantities $b_{T|t}$, $c_{T|t}$, $\beta^2_{T|t}$ only depend on $\rho$, and are thus well-defined when we hold $\rho$ fixed and let $T \to \infty$:
\begin{align*}
b_{T|t} &= \frac{\alpha_T}{\alpha_t} \frac{\sigma_t^2}{\sigma_T^2} = \frac{\alpha(1)}{\alpha(\rho)} \frac{\sigma^2(\rho)}{\sigma^2(1)},\\
c_{T|t} &= \left(\sigma^2(1) - \frac{\alpha^2(1)}{\alpha^2(\rho)} \sigma^2(\rho)\right) \frac{\alpha(\rho)}{\sigma^2(1)}, \\
\beta^2_{T|t} &= \left(\sigma^2(1) - \frac{\alpha^2(1)}{\alpha^2(\rho)} \sigma^2(\rho)\right)  \frac{\sigma^2(\rho)}{\sigma^2(1)} = \frac{\sigma^4(\rho)}{\alpha^2(\rho)}(\snr(\rho) - \snr(1)).
\end{align*}

We start by showing that our $q(\z_{t} | \z_T, \x)$ converges to the corresponding Gaussian distribution in VDM in the continuous-time limit, which in turn implies the convergence of our $q(\z_{t} | \x)$ to the corresponding Gaussian distribution in VDM. %

\begin{theorem} ~\\
    For every fixed $\rho := \frac{t}{T} \in [0, 1)$, 
    $q(\z_t | \z_T, \x) \xrightarrow{\;d\;} \cN(b_{T|t} \, \z_T + c_{T|t} \, \x, \beta^2_{T|t} \, \bI)$ as $T \to \infty$.
\end{theorem}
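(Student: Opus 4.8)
The plan is to apply the Lindeberg--Feller central limit theorem to the sum $\boldsymbol{\omega}_t = \sum_{v=t+1}^{T} \sqrt{12}\,\delta_{v|t}\,\bu_v$ appearing in \eqref{eq:uqdm-fwd-posterior-given-T}, coordinate-wise. Since the $\bu_v$ are independent, uniform on $(-\nicefrac{\bone}{\btwo},\nicefrac{\bone}{\btwo})$, and bounded, the summands $X_v := \sqrt{12}\,\delta_{v|t}\,u_v$ (fixing one coordinate) are independent, mean-zero, with $\Var(X_v) = \delta_{v|t}^2$, and we have already computed that $s_T^2 := \sum_{v=t+1}^T \delta_{v|t}^2 = \beta_{T|t}^2$, which converges to the finite positive constant $\frac{\sigma^4(\rho)}{\alpha^2(\rho)}(\snr(\rho)-\snr(1))$ as $T\to\infty$ (using the continuous-time schedules). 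Because $b_{T|t}\z_T + c_{T|t}\x$ also converges to the stated deterministic limit, it suffices by Slutsky to show $\boldsymbol{\omega}_t \xrightarrow{d} \cN(\bzero, \beta_{T|t}^2\bI)$, i.e. that the normalized sum $\boldsymbol{\omega}_t / s_T$ is asymptotically standard normal in each coordinate.

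First I would verify the Lindeberg condition: for every $\epsilon>0$, $\frac{1}{s_T^2}\sum_{v=t+1}^T \Ex{X_v^2 \mathbf{1}[|X_v| > \epsilon s_T]} \to 0$. The key estimate is that $\max_{v} \delta_{v|t}^2 \to 0$ as $T\to\infty$: indeed $\delta_{v|t}^2 = \frac{\sigma_t^4}{\alpha_t^2}(\SNR(v-1)-\SNR(v))$, and since $\SNR(v) = \snr(v/T)$ with $\snr(\cdot)$ continuously differentiable (or at least uniformly continuous) on $[\rho,1]$, consecutive differences are $O(1/T)$ uniformly in $v$. Since $|X_v| = \sqrt{12}\,|\delta_{v|t}|\,|u_v| \le \sqrt{12}\,|\delta_{v|t}| \cdot \tfrac12 = \sqrt{3}\,|\delta_{v|t}|$, once $T$ is large enough that $\sqrt{3}\max_v|\delta_{v|t}| < \epsilon s_T$ (possible because $\max_v|\delta_{v|t}|\to 0$ while $s_T$ stays bounded away from $0$), every indicator $\mathbf{1}[|X_v|>\epsilon s_T]$ is identically zero, so the Lindeberg sum is exactly $0$ for all large $T$. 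Hence the Lindeberg--Feller CLT applies and gives $\boldsymbol{\omega}_t/s_T \xrightarrow{d}\cN(0,1)$ per coordinate; independence across coordinates (the $\bu_v$ have i.i.d.\ coordinates) upgrades this to the joint Gaussian limit $\boldsymbol{\omega}_t \xrightarrow{d} \cN(\bzero,\beta_{T|t}^2\bI)$, and then adding back the convergent deterministic shift $b_{T|t}\z_T + c_{T|t}\x$ yields the claim.

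The main obstacle is not the CLT machinery itself, which is essentially automatic once the summands are bounded, but rather the two regularity facts about the schedule that make it work: (i) that $s_T^2 = \beta_{T|t}^2$ converges to a \emph{strictly positive} limit (needed so we are not in a degenerate regime and so the Lindeberg denominator does not collapse), which follows from $\snr(\rho)>\snr(1)$ on $[0,1)$ by strict monotonicity; and (ii) the uniform $O(1/T)$ bound on $\SNR(v-1)-\SNR(v)$, which requires spelling out a smoothness assumption on $\snr(\cdot)$ — continuous differentiability on the compact interval $[\rho,1]$ suffices via the mean value theorem. I would state these assumptions explicitly (they match the monotonicity assumption already imposed in the excerpt, plus a mild smoothness requirement inherited from VDM). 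A secondary bookkeeping point is handling the case $\rho=0$ versus $\rho\in(0,1)$ uniformly; the argument is identical since we only ever work on $[\rho,1]\subseteq[0,1)$ closed away from $1$, where $\snr$ is bounded and its increments are controlled. Finally, I would note the corollary: since $q(\z_t|\x) = \int q(\z_t|\z_T,\x)\,q(\z_T|\x)\,d\z_T$ and $q(\z_T|\x)$ is already exactly the VDM Gaussian, convergence in distribution of the conditional (uniformly enough, e.g. via convergence of characteristic functions which is what the CLT actually delivers) passes to the mixture, giving $q(\z_t|\x)\xrightarrow{d}\cN(\alpha_t\x,\sigma_t^2\bI)$ under $\alpha_T=0,\sigma_T=1$ as claimed in the text.
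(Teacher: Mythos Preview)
Your proposal is correct and follows essentially the same route as the paper: reduce to the scalar sum $\boldsymbol{\omega}_t$, note its variance is $\beta_{T|t}^2$, and verify the Lindeberg condition for the triangular array $\{\sqrt{12}\,\delta_{v|t}\,u_v\}$ by using boundedness of the uniform summands together with $\max_v \delta_{v|t}^2 \to 0$, so the indicators vanish identically for large $T$. The only (harmless) discrepancy is that you invoke continuous differentiability of $\snr(\cdot)$ to get the $O(1/T)$ increment bound, whereas the paper uses the weaker fact that $\snr$ is continuous on the compact interval $[0,1]$ and hence uniformly continuous, which already forces $\max_v(\snr((v{-}1)/T)-\snr(v/T))\to 0$; your argument goes through verbatim under this weaker hypothesis.
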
%
\begin{proof} ~\\
Recall the following fact in the forward process of UQDM (see \eqref{eq:uqdm-fwd-posterior-given-T}):
\begin{align}
    \z_t = b_{T|t} \, \z_T + c_{T|t} \, \x + \underbrace{\sum_{v=t+1}^{T} \sqrt{12} 
\delta_{v|t} \bu_{v}}_{\coloneqq {\boldsymbol{\omega}}_t}, \end{align}
where 
\[
\bu_{v} \sim \cU(-\nicefrac{\bone}{\btwo}, \nicefrac{\bone}{\btwo}), v=t+1, ..., T
\]
are independent uniform noise variables, and
$$
\delta_{v|t} := \beta_{v|v-1} \prod_{j=t+1}^{v-1} b_{j|j-1} = \frac{\sigma_t^2}{\alpha_t} \sqrt{\SNR(v-1) - \SNR(v)},
$$
where 
\[
\SNR(s) := \frac{\alpha_s^2}{\sigma_s^2}.
\]
It therefore suffices to show that $\boldsymbol{\omega}_t$ converges in distribution to $\mathcal{N}(\mathbf{0}, \beta^2_{T|t} \bI)$ in the continuous-time limit.
Since the different coordinates of $\boldsymbol{\omega}_t$ are independent, we focus on a single coordinate and study the continuous-time limit of a scalar $\Omega_t$, given by a sum of scaled uniform variables,
\begin{align}
\Omega_t &:= \sum_{v=t+1}^T \left( \frac{\sqrt{12} \sigma^2(\rho)}{\alpha(\rho)} \sqrt{\snr(\frac{v-1}{T}) - \snr(\frac{v}{T})} \right) U_v \\
&= \sum_{j=1}^n \left( \frac{\sqrt{12} \sigma^2(\rho)}{\alpha(\rho)} \sqrt{\snr(\rho + \frac{j-1}{T}) - \snr(\rho + \frac{j}{T})} \right) U_j
\end{align}
where $U_j$'s are i.i.d. $\cU(-\nicefrac{1}{2}, \nicefrac{1}{2})$ variables, and in the last step we set $n:= n(T) = T - t$ and switched the summation index to $j = v - t$.

Define a triangular array of variables by 
\[
X_{n, j} = \left( \frac{\sqrt{12} \sigma^2(\rho)}{\alpha(\rho)} \sqrt{\snr(\rho + \frac{j-1}{T}) - \snr(\rho + \frac{j}{T})} \right) U_j, 
\]
for $j =1, 2, ..., n$ and for $n \in \mathbb{N}^+$.
For each $n$,  $\{X_{n, j}\}_{j=1,2,...,n}$ are independent variables with $\mathbb{E}[X_{n, j}] = 0$, and it can be verified that
\[
\sum_{j=1}^n \mathbb{E}[X_{n, j}^2] = \Var{\Omega_t} = \beta^2_{T|t} = \frac{\sigma^4(\rho)}{\alpha^2(\rho)}(\snr(\rho) - \snr(1)).
\]

To apply the Lindeberg-Feller central limit theorem \citep[Theorem 3.4.10]{durrett2019probability} to $\Omega_t = X_{n, 1} + ... + X_{n, n}$, it remains to verify the condition 
\[
\forall \epsilon > 0, \lim_{n\to \infty} \sum_{j=1}^n \mathbb{E}[X_{n,j}^2 \eins\{\abs{X_{n, j}} > \eps\}] = 0.
\]
Let $\epsilon > 0$. Since $\snr(\cdot)$ is continuous on a compact domain $[0, 1]$, it is also uniformly continuous; then there exists a $\delta$ such that
\begin{align}
|\snr(x_1) - \snr(x_2) | < \left( \frac{\epsilon \alpha(\rho)}{\sqrt{12} \sigma^2(\rho)} \right)^2, \quad \forall x_1, x_2, |x_1 - x_2| < \delta.  \label{eq:uc-choice-of-delta}
\end{align}
Let $T$ (and thus $n=T-t$) become sufficiently large such that $\frac{1}{T} < \delta$. Then, for all such $T$ (and thus $n$) sufficiently large, and for all $j$, it holds that $\eins\{\abs{X_{n, j}} > \eps\} =0$ almost everywhere: %
\begin{align}
    \mathbb{P}(|X_{n,j}| > \epsilon) &= \mathbb{P}\left(\left( \frac{\sqrt{12} \sigma^2(\rho)}{\alpha(\rho)} \sqrt{\snr(\rho + \frac{j-1}{T}) - \snr(\rho + \frac{j}{T})} \right) |U_j| > \epsilon \right) \\
    &= \mathbb{P}\left( |U_j| > \frac{ \epsilon \alpha(\rho)} {\sqrt{12} \sigma^2(\rho)} \frac{1}{\sqrt{\snr(\rho + \frac{j-1}{T}) - \snr(\rho + \frac{j}{T})}} \right) \\
    & \stackrel{\text{by \eqref{eq:uc-choice-of-delta}}}{\leq} \mathbb{P}\left( |U_j| > 1 \right) \\
    &= 0
\end{align}
since $U_j \sim \cU(-\nicefrac{1}{2}, \nicefrac{1}{2})$, and it follows that
\[
\mathbb{E}[X_{n,j}^2 \eins\{\abs{X_{n, j}} > \eps\}] = 0
\]
for all $j$ for all sufficiently large $n$. 
We conclude by the Lindeberg-Feller theorem that 
\[
\Omega_t = X_{n, 1} + ... + X_{n, n} \xrightarrow{\;d\;} \mathcal{N}(0, \beta^2_{T|t})
\]
as $T \to \infty$. 
Applying the above argument coordinate-wise then proves the original statement. 
\end{proof}

\eat{
OLD:
\begin{theorem} ~\\
    For every $t$,
    $q(\z_t | \z_T, \x) \xrightarrow{\;d\;} \cN(b_{T|t} \, \z_T + c_{T|t} \, \x, \beta^2_{T|t} \, \bI)$ as $T \to \infty$.
\end{theorem}%
\begin{proof} ~\\
    As $\snr(t)$ by assumption is continuous, strictly monotone, and defined on a compact domain, it has finite range and is thus uniformly continuous. For $\sigma_{ni}^2 \coloneqq 12 \sigma_0^4/\alpha_0^2 (\snr((i-1)/n) - \snr(i/n))$ the latter implies $\max_{i \in \{1, \dots, n\}} \sigma_{ni}^2 \to 0$ as $n \to \infty$. Let $X_{ni} \coloneqq \sigma_{ni} \, \bu_{ni}, \bu_{ni} \sim \cU(-\nicefrac{\bone}{\btwo}, \nicefrac{\bone}{\btwo})$ iid, then $X_{ni}$ is a triangular array with independent rows, $\Ex{X_{ni}} = 0$, and $\Var{X_{ni}} = \sigma^2_{ni} < \infty$. Thus, we can apply the Lindeberg-Feller CLT which yields that $Z_n \coloneqq \sum_{i=1}^n X_{ni} \xrightarrow{\;d\;} \cN(0, s)$ if
    $$ \frac{1}{s} \sum_{i=1}^n \Ex{X_{ni}^2 \eins\{\abs{X_{ni}} \geq \eps\}} \xrightarrow{n \to \infty} 0 $$
    holds for all $\eps > 0$. In this case, $s = \Var{Z_n} = \sigma_0^4/\alpha_0^2 (\snr(0) - \snr(1)) = \beta^2_{T|0}$. The condition holds trivially as by construction $P(\abs{X_{ni}} \geq \sqrt{3}\,\sigma_{ni}) = 0$ and for every $\eps > 0$ there exists $N_\eps$ with $\eps > \sqrt{3}\,\sigma_{ni}$ for all $i$ and $n > N_\eps$ as $\max_{i \in \{1, \dots n\}} \sigma_{ni}^2 \to 0$. The statement follows for $t=0$ as $Z_n \sim  \boldsymbol{\omega}_t |_{T=n}$, and analogously for arbitrary $t$ by considering $\sigma_{ni}^2 \coloneqq 12 \sigma_t^4/\alpha_t^2 (\snr(t + (i-1)(1-t)/n ) - \snr(t + i(1-t)/n ))$.
\end{proof}
}

\begin{corollary} ~\\
    \label{cor:convergence}
    If we assume $\sigma_T$ and $\alpha_T$ to be constants, then for every $t$, $q(\z_t | \x) \xrightarrow{\;d\;} \cN(\alpha_t \x, \sigma^2_t \bI)$ as $T \to \infty$, that is, our forward model approaches the Gaussian forward process of VDM with an increasing number of diffusion steps.
\end{corollary}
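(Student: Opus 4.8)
The plan is to express the forward marginal $q(\z_t \mid \x)$ as a convolution involving a single fixed measure and to transport the convergence in the preceding theorem through it. I would start from the representation in \eqref{eq:uqdm-fwd-posterior-given-T}, $\z_t = b_{T|t}\,\z_T + c_{T|t}\,\x + \boldsymbol{\omega}_t$, where $\z_T \sim \cN(\alpha_T \x, \sigma_T^2 \bI)$ and the accumulated noise $\boldsymbol{\omega}_t = \sum_{v=t+1}^T \sqrt{12}\,\delta_{v|t}\,\bu_v$ is a deterministic function of the exogenous innovation variables $\bu_{t+1},\dots,\bu_T$ and is therefore independent of $\z_T$ (given $\x$). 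The first observation is that under the hypothesis that $\alpha_T,\sigma_T$ are constants (in the continuous-schedule picture this means holding $\rho = t/T$ fixed and letting $T\to\infty$, as in the theorem), the coefficients $b_{T|t}, c_{T|t}$ and the variance $\beta_{T|t}^2 = \tfrac{\sigma_t^4}{\alpha_t^2}\big(\snr(\rho)-\snr(1)\big)$ do not depend on $T$. Hence the law of $b_{T|t}\z_T + c_{T|t}\x$ is a single fixed Gaussian, namely $\cN\!\big((b_{T|t}\alpha_T + c_{T|t})\x,\; b_{T|t}^2\sigma_T^2\bI\big)$, and the only $T$-dependence left in the law of $\z_t$ resides in the law of $\boldsymbol{\omega}_t$.

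Next I would invoke the theorem. Conditioning on $\z_T$ and $\x$, it states $b_{T|t}\z_T + c_{T|t}\x + \boldsymbol{\omega}_t \xrightarrow{d} \cN(b_{T|t}\z_T + c_{T|t}\x, \beta_{T|t}^2\bI)$, which (the conditioning contributing only a fixed shift) is the same as $\boldsymbol{\omega}_t \xrightarrow{d} \cN(\mathbf{0},\beta_{T|t}^2\bI)$. Since $q(\z_t\mid\x)$ is exactly the convolution of the $T$-independent Gaussian law of $b_{T|t}\z_T + c_{T|t}\x$ with the law of the independent term $\boldsymbol{\omega}_t$, and convolution with a fixed probability measure is continuous with respect to weak convergence, we obtain $q(\z_t\mid\x) \xrightarrow{d} \cN\!\big((b_{T|t}\alpha_T + c_{T|t})\x,\; (b_{T|t}^2\sigma_T^2 + \beta_{T|t}^2)\bI\big)$. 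Equivalently, one can argue at the level of characteristic functions: $\phi_{\z_t\mid\x}$ factors as a fixed Gaussian characteristic function times $\phi_{\boldsymbol{\omega}_t}$, the latter converging pointwise by the theorem, and the conclusion follows from Lévy's continuity theorem since the limit is a Gaussian characteristic function.

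The last step is purely algebraic: plugging in the definitions of $b_{T|t}, c_{T|t}, \sigma^2_{T|t}, \beta^2_{T|t}$ gives $b_{T|t}\alpha_T + c_{T|t} = \alpha_t$ and $b_{T|t}^2\sigma_T^2 + \beta_{T|t}^2 = \sigma_t^2$ — these are precisely the identities expressing that marginalizing the Gaussian $q(\z_t\mid\z_T,\x)$ against the Gaussian $q(\z_T\mid\x)$ returns $\cN(\alpha_t\x,\sigma_t^2\bI)$ in the VDM case — so the limiting distribution is $\cN(\alpha_t\x, \sigma_t^2\bI)$, as claimed. I do not anticipate a genuine obstacle: the substantive content is delivered by the theorem, and what remains is the structural remark that the mixing variable $\z_T$ enters only through an additive Gaussian that is independent of the noise (so ``marginalize over $\z_T$'' $=$ ``convolve with a fixed measure''), together with two routine identities; the only point requiring mild care is verifying that $b_{T|t}, c_{T|t}, \beta^2_{T|t}$ are indeed $T$-independent under the stated hypothesis, so that those identities, which are $T$-free, can be applied to the limit.
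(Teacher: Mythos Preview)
Your proposal is correct and follows essentially the same approach as the paper: both exploit that $q(\z_T\mid\x)$ is a fixed Gaussian (independent of $T$ under the hypothesis), transport the convergence from the theorem through the mixture over $\z_T$, and then invoke the Gaussian marginalization identity to identify the limit as $\cN(\alpha_t\x,\sigma_t^2\bI)$. Your version is somewhat more explicit---you isolate the independent additive decomposition $\z_t = (b_{T|t}\z_T + c_{T|t}\x) + \boldsymbol{\omega}_t$ and justify the passage to the limit via convolution continuity / characteristic functions, whereas the paper compresses this into the single line ``joint convergence implies marginal convergence''---but the substance is the same.
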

\begin{proof}
    As $q(\z_T | x) = \cN(\alpha_T \x, \sigma^2_T \bI)$ does not depend on $T$, the joint distribution $q(\z_t, \z_T | x) = q(\z_t | \z_T, \x) q(\z_T | x)$ converges in distribution, which in turn implies convergence of $q(\z_t | x)$. The statement then follows from the identity 
    $$\cN(\z_t; \alpha_t \x, \sigma^2_t \bI) = \int \cN(\z_t; b_{T|t} \, \z_T + c_{T|t} \, \x, \beta^2_{T|t} \, \bI) \, \cN(\z_T; \alpha_T \x, \sigma^2_T \bI) \,d\z_T.$$
\end{proof}

\section{Backward process details and rate estimates}
\label{sec:app-backward}

\subsection{Gaussian (DDPM/VDM)}

\citet{kingma2021variational} set $p(\z_{t-1} | \z_t) \coloneqq q(\z_{t-1} | \z_t, \x=\hat{\x}_t)
= \cN(b_{t|t-1} \, \z_t + c_{t|t-1} \, \hat{\x}_t, \beta^2_{t|t-1} \bI)$ which yields
\begin{align*}
    L_{t-1} &= \KL{\cN(b_{t|t-1} \, \z_t + c_{t|t-1} \, \x, \beta^2_{t|t-1} \bI)}{\cN(b_{t|t-1} \, \z_t + c_{t|t-1} \, \hat{\x}_t, \beta^2_{t|t-1} \bI)} \\
    &= \frac{1}{2}\frac{c_{t|t-1}^2}{\beta_{t|t-1}^2} \norm{\x - \hat{\x}_t}_2^2
    = \frac{1}{2}(\SNR(t-1) - \SNR(t)) \norm{\x - \hat{\x}_t}_2^2.
\end{align*}
We have that $L_{t-1} \to 0$  as $ T \to \infty$, due to the continuity of $\SNR(\cdot\,/T) = \snr(\cdot) = \alpha(\cdot)^2 / \sigma(\cdot)^2$.

\subsection{Uniform (Ours)}

\label{app:rate-estimates}

Recall that we choose each  coordinate of the reverse-process model $p(\z_{t-1} | \z_t)$ to have the density
\begin{align*}
    p(\z_{t-1} | \z_t)_i &\coloneqq g_t(z) \star \cU(z; -\Dt, \Dt) \\
    &= \frac{1}{\DDt} \int_{z-\Dt}^{z+\Dt} g_t(z) \,dz = \frac{1}{\DDt} (G_t(z + \Dt) - G_t(z - \Dt)),
\end{align*}
where $G_t$ and $g_t$ are the cdf and pdf of a distribution with mean $\hat{\mu}_t \coloneqq b_{t|t-1} z + c_{t|t-1} \hat{x}$ and variance $\sigma_g^2$, $z:=(\z_t)_i$, $x:=\x_i$, and $\hat{x}:=\hat{\x}_\theta(\z_t; t)_i$.
Using the shorthand $\mu_t \coloneqq b_{t|t-1} z + c_{t|t-1} x$ we can derive the rate associated with the $i$th coordinate
\begin{align*}
    L_{t-1} &= \KL{\cU(z; \mu_t - \Dt, \mu_t + \Dt)}{g_t(z) \star \cU(z; -\D_t, \D_t)} \\
    &= \frac{1}{\DDt} \int_{\mu_t-\Dt}^{\mu_t+\Dt} \log \frac{\frac{1}{\DDt}\eins_{[\mu_t - \Dt, \mu_t + \Dt]}(z)}{\frac{1}{\DDt}(G_t(z + \Dt) - G_t(z - \Dt))} \,dz \\
    &= \frac{1}{\DDt} \int_{-\Dt}^{\Dt} - \underbrace{\log (G_t(z + \mu_t + \Dt) - G_t(z + \mu_t - \Dt))}_{\coloneqq \displaystyle h(z)} \,dz. \\
\end{align*}
To gain some intuition for this rate, note that $h(z)$ is lowest when most of the probability mass of $G_t$ is concentrated tightly around $z + \mu_t$, which is the case when $\abs{\mu_t - \hat{\mu}_t}$ is small. Specifically, if $G_t$ is in a distributional family with a standardized cdf $G_0$ such that $G_t(z) = G_0((z - \hat{\mu}_t) / \sigma_g)$ then
$$G_t(z + \mu_t + \Dt) - G_t(z + \mu_t - \Dt) \rightarrow
\begin{cases}
1 &\text{ if } \abs{z + \mu_t - \hat{\mu}_t} < \Dt\\
G_0(0) &\text{ if } \abs{z - \mu_t - \hat{\mu}_t} = \Dt\\
0 &\text{ else }
\end{cases}
$$
as $\sigma_g \to 0$. Thus, if $\abs{\mu_t - \hat{\mu}_t} \ll \Dt$, we obtain improved bit-rates for $\sigma_g$ that are small (relative to $\DDt$). On the other hand, as almost certainly $\abs{\mu_t - \hat{\mu}_t} > 0$, we can't choose arbitrarily small $\sigma_g$ because in that case both $\max(-h(-\Dt), -h(\Dt)) \to \infty$ and $L_{t-1} \to \infty$ as $\sigma_g \to 0$. This further motivates the merit of learning the backwards variances as $\sigma_g^2 = s_\theta(z) \beta_{t|t-1}^2 = s_\theta(z) \DDt^2/12$, allowing them to adapt to $\abs{\mu_t - \hat{\mu}_t}$.
Conversely, by the mean value theorem, there exists one $c \in (-\Dt, \Dt)$ so that
$$G_t(z + \mu_t + \Dt) - G_t(z + \mu_t - \Dt) = \DD_t g_t(z + \mu_t + c) \approx \DD_t g_t(z + \mu_t)$$
where the last approximation becomes more accurate for larger $\sigma_g$.
If we further assume that $G_t$ is Gaussian (or sufficiently similar) $h(t)$ becomes approximately quadratic. In that case we study
\begin{align*}
    h(z) \approx \left(1 - \frac{4z^2}{\DDt^2}\right)h(0)
    + \frac{2z^2 - \DDt z}{\DDt^2}h(-\Dt)
    + \frac{2z^2 + \DDt z}{\DDt^2}h(\Dt),
\end{align*}
a quadratic function that exactly matches $h$ at values $z \in \{-\Dt, 0, \Dt\}$. Finally, this results in
\begin{align*}
    L_{t-1} &\approx \frac{1}{\DDt} \left[
    \frac{2}{\DDt^2} \left(h(-\Dt) + h(\Dt) - 2 h(0)\right) \int_{-\Dt}^{\Dt} z^2 \,dz
    + \frac{1}{\DDt} \left(h(\Dt) - h(\Dt))\right) \int_{-\Dt}^{\Dt} z \,dz
    + \DDt h(0) \right] \\
    &= - \frac{1}{6} \left[4h(0) + h(-\Dt) + h(\Dt)\right] \geq \frac{1}{3} \log(2),
\end{align*}
where the last equality uses $h(z) \leq 0$ and $h(-\Dt) + h(\Dt) \leq \log(0.25)$ which follow from the fact that $G_t$ is a cdf. Empirically we note that this estimate is very accurate as long as $\sigma^2_g \geq \beta^2_{t|t-1}$, demonstrating that simply matching moments as in VDM will occur a constant overhead for each diffusion step. As seen in \Cref{fig:toy_results}, this can be partly mitigated with smaller $\sigma^2_g$ but increasing the number of diffusion steps $T$ might still lead to an increase in ELBO. Numerical integration of $L_{t-1}$ confirms that if $\sigma^2_g$ is close to the optimal choice of $\sigma_g \approx \abs{\mu_t - \hat{\mu}_t}$, 
$L_{t-1} \to 0$ as $T \to \infty$ as in the Gaussian case.

\subsection{Flow-based reconstructions}
\label{sec:app-flow}

Given an intermediate latent $\z_t$, ancestral sampling yields an intermediate lossy reconstruction $\hat{\x} \sim p(\x | \z_t)$ that requires us to repeatedly sample from the conditional $p(\z_{t-1} | \z_t)$ until finally obtaining a reconstruction from $\z_0$ with the help of $p(\x | \z_0)$. This is equivalent to approximately solving a reverse SDE \citep{song2021score} and introduces additional noise during inference, which can make reconstructions grainy for diffusion models with a small number of steps, as can be seen in \Cref{fig:imagenet-progressive}. \citet{song2021score} further note that an alternative approximate solution to the SDE can be obtained by deterministically reversing a ``probability-flow'' ODE (see also \citet{theis2022lossy}). Specifically, this involves repeatedly evaluating $\z_{t-1} = f(\z_t, t)$, where $f$ for VDM is defined as
\begin{align}
    \label{eq:flow}
    f(\z_t, t) = \frac{\alpha_{t-1}}{\alpha_t} \, \z_t + \left(\sigma_{t-1} - \frac{\alpha_{t-1}}{\alpha_t} \sigma_t \right) \hat{\veps}_t = \frac{\sigma_{t-1}}{\sigma_t} \, \z_t + \left(\alpha_{t-1} - \frac{\sigma_{t-1}}{\sigma_t} \alpha_t \right) \, \hat{\x}_t, 
\end{align}
recovering the same process defined in \citep{song2021denoising}.
The equivalence of the continuous limit in \Cref{cor:convergence}, suggests that the discrete-time backward processes of UQDM and VDM are similar enough in the sense that \eqref{eq:flow} also approximately solves the implied reverse SDE of UQDM. Thus we use \eqref{eq:flow} to obtain flow-based reconstructions for both VDM and UQDM.

\begin{figure}[H]
    \centering
    \subfloat{{\includegraphics[width=0.45\textwidth]{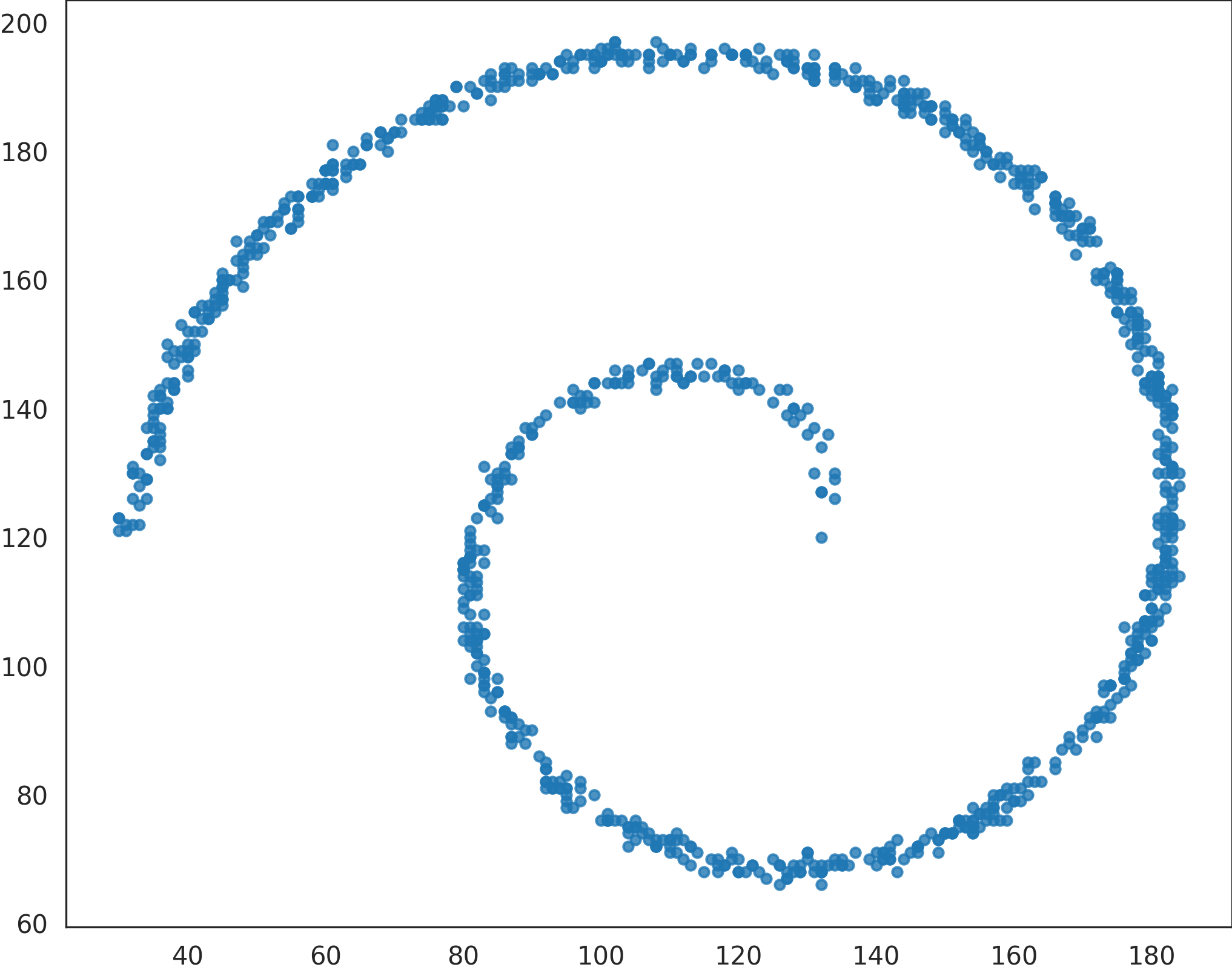}}}%
    \hfill
    \subfloat{{\includegraphics[width=0.47\textwidth]{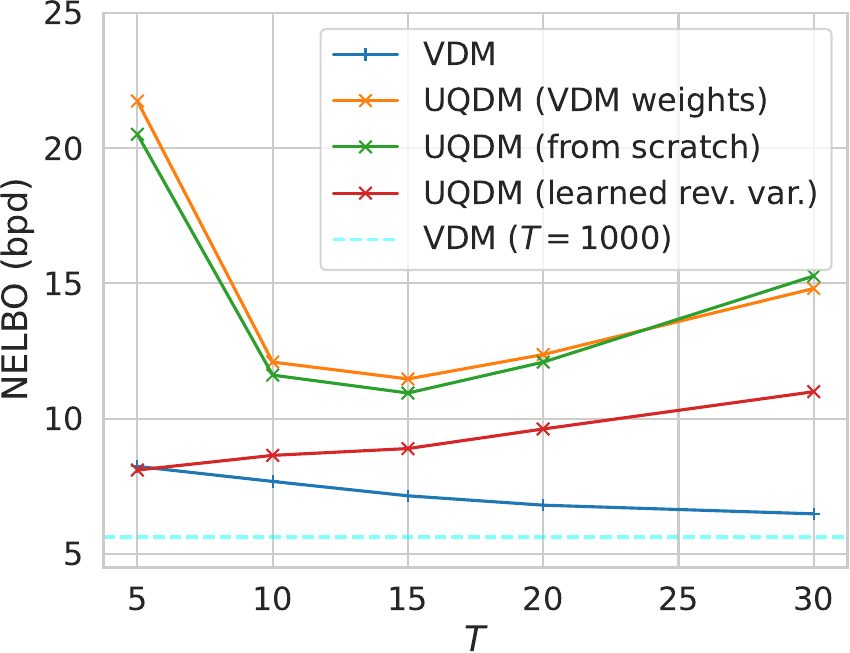}}}%
    \caption{\textbf{Left:} $1000$ samples from the toy swirl source. \textbf{Right:} Additional results on swirl data. 
    We examined the compression performance of applying universal quantization to a pre-trained VDM model; conceptually this is equivalent to 
    When using fixed reverse-process variances, we can directly re-use weights from a pretrained VDM model (orange), which achieves comparable results to training a UQDM model from scratch, even for a smaller number of timesteps.}
    \label{fig:swirl}
\end{figure}

\begin{figure}
    \centering
    \includegraphics[width=0.9\linewidth]{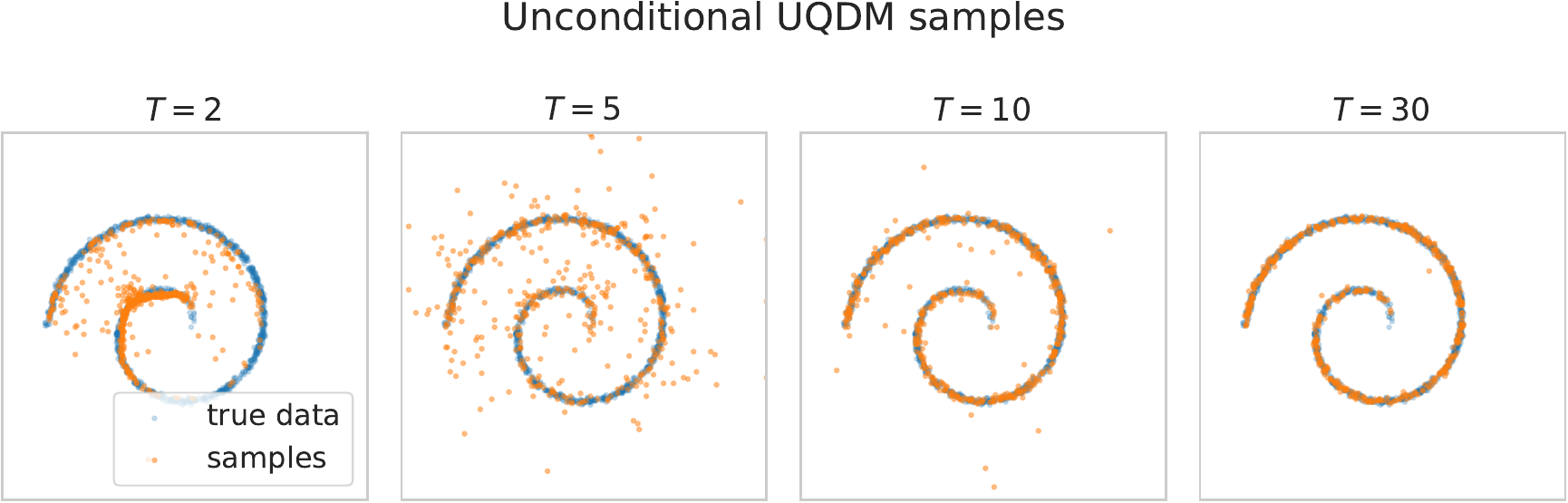}
    \caption{Unconditional samples from UQDM models trained with varying $T$ on the swirl dataset. The sample quality improves with larger $T$; however the compression performance becomes worse after $T>5$, as discussed in Section ~\ref{sec:experiments}.
    }
    \label{fig:uqdm-uncond-samples}
\end{figure}

\begin{figure}[H]
    \centering
    \subfloat{{\includegraphics[width=0.47\textwidth]{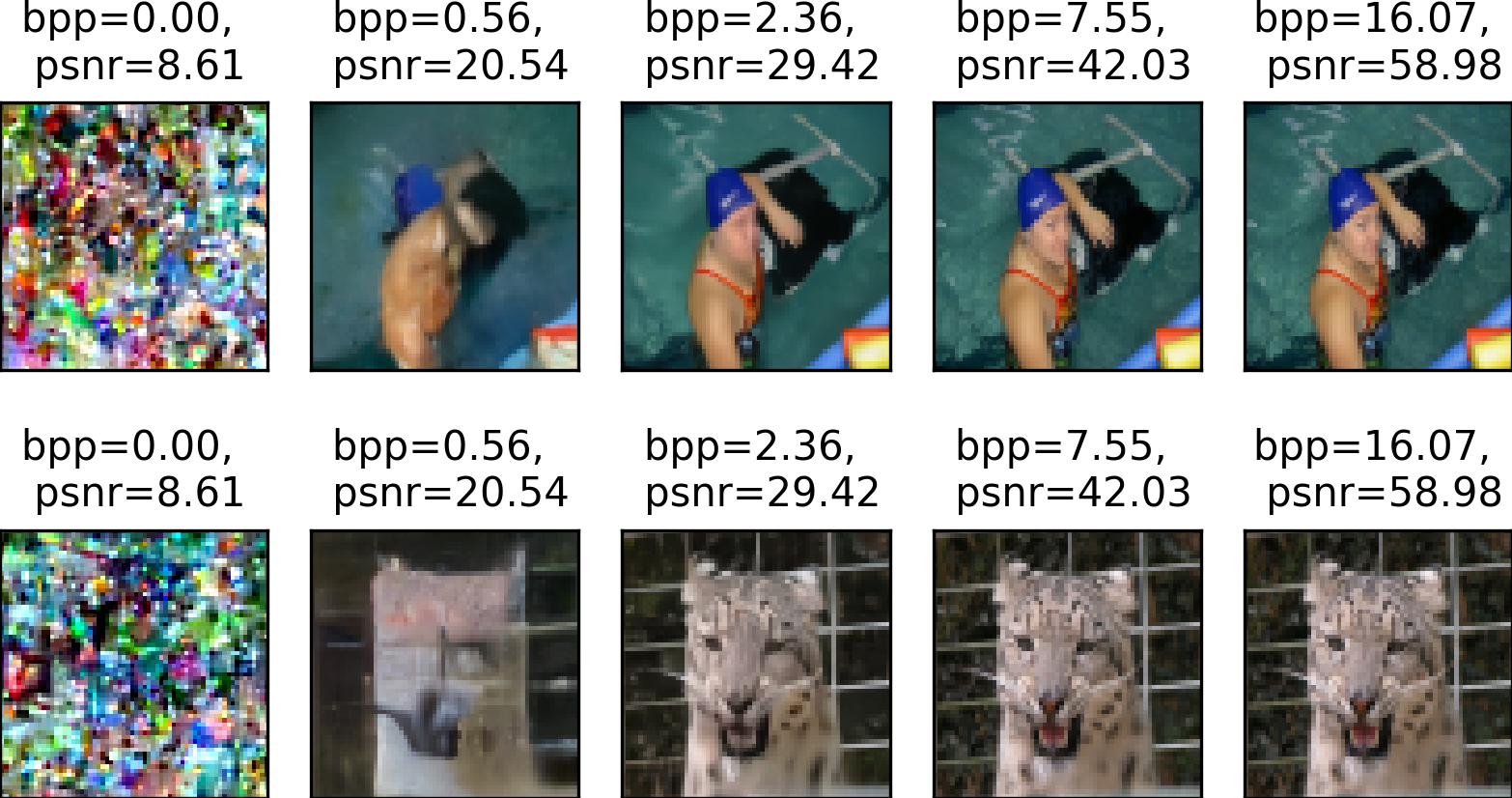}}}%
    \hfill
    \subfloat{{\includegraphics[width=0.47\textwidth]{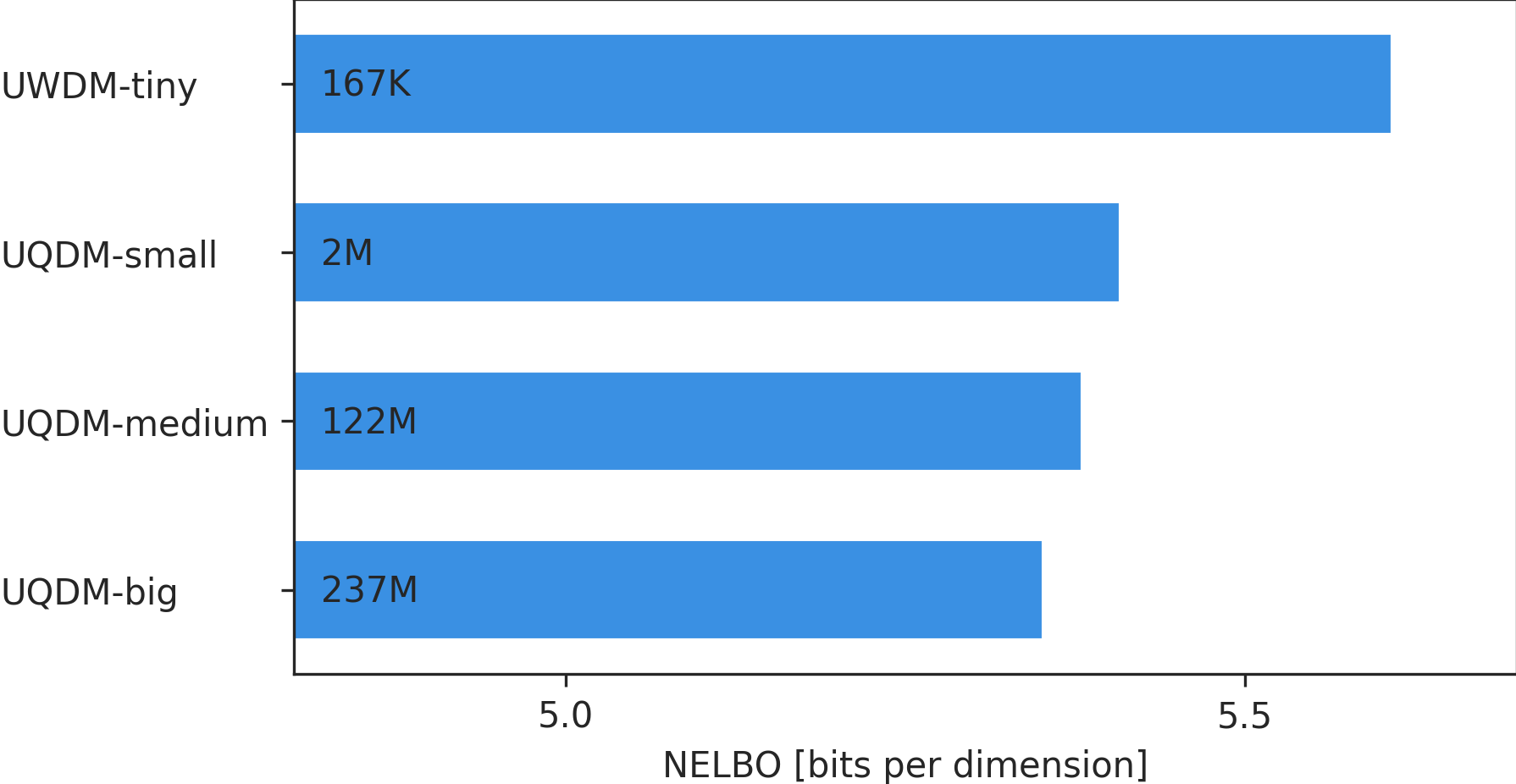}}}%
    \caption{Additional results on ImageNet 64x64 data. \textbf{Left:} Example progressive reconstructions from UQDM trained with T = 4, obtained with
flow-based denoising, as in \Cref{fig:imagenet-progressive}. Flow-based reconstructions achieve similar distortion (as meassured with PSNR) than denoised predictions at higher fidelity (as meassured with FID). \textbf{Right:} Ablation of the influence of model size on validation loss. Bars are labeled with the number of parameters for each model. Increasing the size of the denoising network allows for smaller bitrates. }
    \label{fig:imagenet-more}
\end{figure}

\section{Additional experimental results} \label{sec:app-experiments}

\subsection{Swirl data}
\label{sec:app-swirl}
We use the swirl data from the codebase of \citep{kingma2021variational}; 
 \Cref{fig:swirl} shows $1000$ samples from the toy data source.
We use the same denoisng network $\hat{\x}_\theta$ as in the official implementation,\footnote{\url{https://github.com/google-research/vdm/blob/main/colab/2D_VDM_Example.ipynb}} which consists of 2 hidden layers with 512 units each. \Cref{fig:swirl} highlights the consequence of \Cref{cor:convergence}: Because VDM and UQDM share the same continuous limit, we can use the weights of a pretrained VDM to obtain comparable UQDM results as a UQDM model that has been trained from scratch.

\subsection{CIFAR10}
We use a scaled-down version of the denoising network from the VDM paper \citep{kingma2021variational} for faster experimentation.
We use a U-Net of depth 8, consisting of 8
ResNet blocks in the forward direction and 9 ResNet blocks in the reverse direction, with a single
attention layer and two additional ResNet blocks in the middle. We keep the number of channels
constant throughout at 128.

We verified that our UQDM implementation based on \texttt{tensorflow-compression} achieves file size close the theoretical NELBO. When compressing a single 32x32 CIFAR image, we observe file size overhead $\leq 3\%$ of the theoretical NELBO. In terms of computation speed, it takes our model with fixed reverse-process variance less than 1 second to encode or decode a CIFAR image, either on CPU or GPU,\footnote{Around 0.6 s for encoding and 0.5 s for decoding on \texttt{Intel(R) Xeon(R) Gold 5218 CPU @ 2.30GHz} CPU; 0.5 s for encoding and 0.3 s for decoding on a single \texttt{Quadro RTX 8000} GPU.} likely because the very few neural-network evaluations required ($T=4$). For our model with learned reverse-process variance, however, it takes about 5 minutes to compress or decompress a CIFAR image, with nearly all of the compute time spent on a single CPU core.
This is because with learned reverse-process variance, each latent dimension has a different predicted variance, and a separate CDF table needs to be built for each latent dimension during entropy coding;
the \texttt{tensorflow-compression} library builds the CDF table for each coordinate in a naive for-loop rather than in parallel. Thus we expect the coding speed to be dramatically faster with a parallel implementation of entropy coding, e.g., using the \texttt{DietGPU}\footnote{\url{https://github.com/facebookresearch/dietgpu}} library.

\begin{figure}[t]
    \centering
    \includegraphics[width=\textwidth]{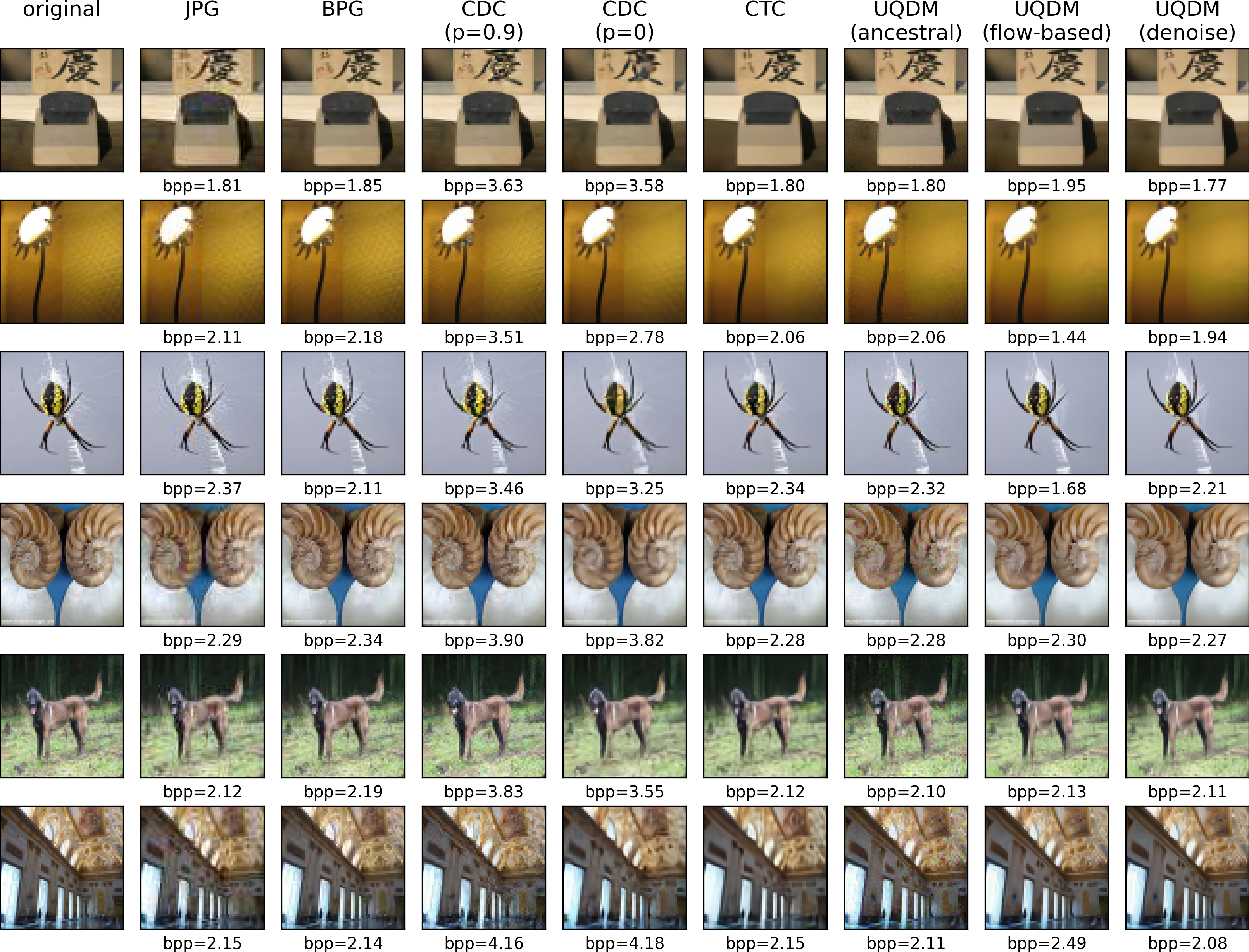}
    \caption{
    Additional example reconstructions
    , chosen at roughly similar (high) bitrates.
    }
    \label{fig:imagenet-qualitative-e1}
    \hspace{1em}
    \centering
    \includegraphics[width=\textwidth]{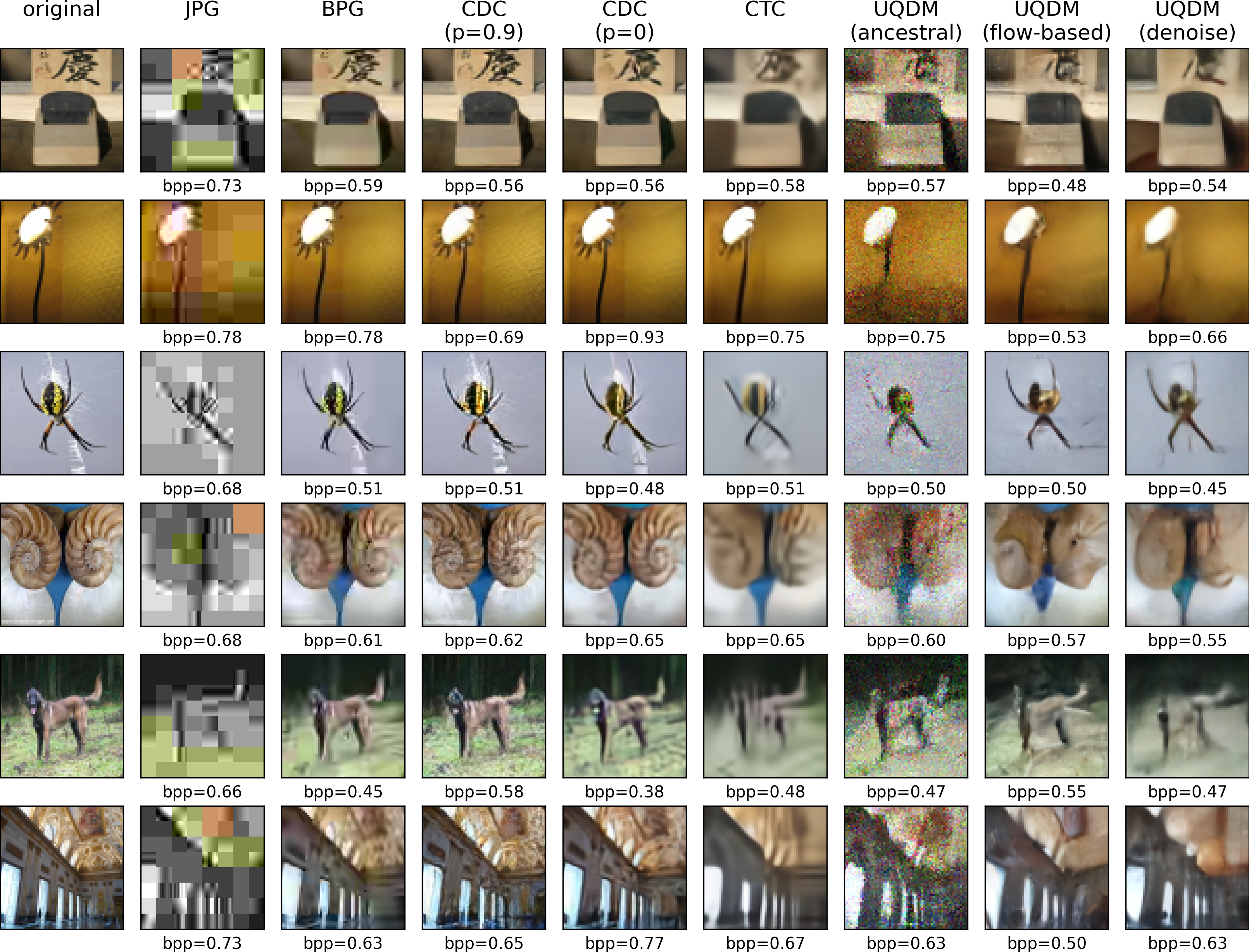}
    \caption{
    Additional example reconstructions
    , chosen at roughly similar (low) bitrates.
    }
    \label{fig:imagenet-qualitative-e2}
\end{figure}

\subsection{ImageNet $64 \times 64$}

We use the same denoising network as in the VDM paper \citep{kingma2021variational}.
We use a U-Net of depth 64, consisting of 64
ResNet blocks in the forward direction and 65 ResNet blocks in the reverse direction, with a single
attention layer and two additional ResNet blocks in the middle. We keep the number of channels
constant throughout at 256. To investigate the impact of the size of the denoising network, in addition to this configuration with 237M parameters we call UQDM-big, we also run experiments with three smaller networks with 32 ResNet blocks and 128 channels (UQDM-medium, 122M parameters), 8 ResNet blocks and 64 channels (UQDM-small, 2M parameters), and 1 ResNet block and 32 channels (UQDM-tiny, 127K parameters), respectively. Smaller network are significantly faster and more resource-efficient but will naturally suffer from higher bitrates, as can be seen in \Cref{fig:imagenet-more}.

The required number of FLOPS per pixel for encoding and decoding is strongly dominated by the number of neural function evaluations (NFE) of our denoising network which depends on how soon we stop the encoding and decoding process. For lossless compression we have to multiple the FLOPS per NFE with $T$ which is equal to 4 in our case. For lossy compression after $t$ steps, with lossy reconstructions obtained through a denoised prediction, we obtain the required FLOPS for encoding and decoding by multiplying with $t$ and $t+1$ respectively. The FLOPS per NFE depend on the network size, our investigated model size require 389K, 2.3M, 105M, and 204M FLOPS per pixel, in order from smallest to biggest model.

\Cref{fig:imagenet-qualitative-e1,fig:imagenet-qualitative-e2} show more example reconstructions from several traditional and neural codecs, similar to \Cref{fig:imagenet-qualitative}. At lower bitrates the artifacts each compression codecs introduces become more visible.

\end{document}